%% file: main_arxiv.tex
\documentclass{article}
\usepackage[preprint]{neurips_2026}

\usepackage[utf8]{inputenc} 
\usepackage[T1]{fontenc}    
\usepackage{url}            
\usepackage{amsfonts}       
\usepackage{nicefrac}       
\usepackage{microtype}      
\usepackage{xcolor}         
\usepackage{graphicx}
\usepackage{subcaption}
\usepackage{amsmath,amssymb,amsthm}
\usepackage{mathtools}
\usepackage{bm}               
\usepackage{bbm}              
\usepackage{booktabs}
\usepackage{multirow}
\usepackage{array}
\usepackage{colortbl}          
\usepackage{makecell}          
\usepackage{pifont}            
\usepackage{wrapfig}
\usepackage{placeins}         
\usepackage{algorithm}
\usepackage{algorithmic}        
\usepackage{natbib}
\usepackage{caption}
\usepackage[colorlinks=true,
            linkcolor=red,
            citecolor=blue,
            urlcolor=blue,
            pdftitle={LILA: Calibration-Free Structured Pruning of LLMs via Latent Spectral Geometry},
            pdfauthor={Sankar Behera, Dhruv Singh, Anshika Agnihotri, Raj Kumar Choudhary, Satyadev Ahlawat, Yamuna Prasad},
            pdfkeywords={LLM pruning, KS distance, spectral analysis, structured sparsity}
           ]{hyperref}
\usepackage{cleveref}         

\DeclareMathOperator*{\argmin}{arg\,min}

\theoremstyle{plain}
\newtheorem{theorem}{Theorem}[section]

\newtheorem{proposition}[theorem]{Proposition}

\theoremstyle{definition}
\newtheorem{definition}[theorem]{Definition}

\theoremstyle{remark}

\newtheorem*{remark*}{Remark}

\begin{document}

\graphicspath{{figures/}}

\title{%
  LILA: Calibration-Free Structured Pruning of\\
  Large Language Models via Latent Spectral Geometry%
}

\author{}   
\date{}     

\maketitle

\vspace{-4.0em}  
\begin{center}
  \begin{minipage}[t]{0.32\textwidth}\centering
    \textbf{Sankar Behera}\\
    CSE, IIT Jammu\\
    {\footnotesize\texttt{sankar.behera@iitjammu.ac.in}}
  \end{minipage}\hfill
  \begin{minipage}[t]{0.32\textwidth}\centering
    \textbf{Dhruv Singh}\\
    Mathematics, IIT Jammu\\
    {\footnotesize\texttt{2023uma0210@iitjammu.ac.in}}
  \end{minipage}\hfill
  \begin{minipage}[t]{0.32\textwidth}\centering
    \textbf{Anshika Agnihotri}\\
    CSE, IIT Jammu\\
    {\footnotesize\texttt{2025pcs0022@iitjammu.ac.in}}
  \end{minipage}

  \vspace{3.0em}

  \begin{minipage}[t]{0.32\textwidth}\centering
    \textbf{Raj Kumar Choudhary}\\
    IT, EC Bikaner\\
    {\footnotesize\texttt{Choudhary.rajkumar@ecb.ac.in}}
  \end{minipage}\hfill
  \begin{minipage}[t]{0.32\textwidth}\centering
    \textbf{Satyadev Ahlawat}\\
    EE, IIT Jammu\\
    {\footnotesize\texttt{satyadev.ahlawat@iitjammu.ac.in}}
  \end{minipage}\hfill
  \begin{minipage}[t]{0.32\textwidth}\centering
    \textbf{Yamuna Prasad}\\
    CSE, IIT Jammu\\
    {\footnotesize\texttt{yamuna.prasad@iitjammu.ac.in}}
  \end{minipage}
\end{center}

\vspace{1em}

\begin{abstract}
Structured pruning of large language models (LLMs) offers hardware-efficient
compression, yet existing methods require calibration data, gradient computation,
or large auxiliary policy networks at pruning time.
LILA (\emph{Latent-Informed Layer Analysis}) scores neuron importance
via the Kolmogorov--Smirnov (KS) distance between empirical singular value
distributions of the full and neuron-ablated feed-forward network (FFN) weight matrix, providing a closed-form
spectral rule requiring no training, calibration data, or auxiliary network.
Without any fine-tuning, LILA surpasses PruneNet (45M-parameter RL
policy) by 1.57~pp in zero-shot accuracy on LLaMA-2-7B at 25\% sparsity, and outperforms
WikiText-2-calibrated SliceGPT by up to 6.0~pp across all sparsity
levels, while preserving the original architecture.
After one epoch of LoRA recovery fine-tuning, LILA achieves highly competitive 
performance, matching the heavily calibrated SliceGPT baseline to within a 0.48~pp 
margin across LLaMA-2-7B and Phi-2, despite using zero calibration data.
A Neural Tangent Kernel analysis confirms a 22$\times$ reduction in functional
distortion versus random pruning, providing theoretical grounding for the
spectral importance criterion. Finally, extending LILA to dynamically allocate
sparsity budgets via KS-scores yields state-of-the-art generative preservation at
moderate compression, while uncovering fundamental single-layer architectural
bottlenecks at higher compression regimes.
\end{abstract}

\input{sections/introduction}
\input{sections/related_work}

\input{sections/method}

\input{sections/experiments}
\input{sections/conclusion}
\bibliographystyle{abbrvnat}
\bibliography{neurips}

\input{sections/appendix}


\end{document}

%% file: sections/introduction.tex

\section{Introduction}
\label{sec:intro}

The deployment of large language models (LLMs) at scale is constrained by
substantial computational and memory requirements~\citep{brown2020gpt3,touvron2023llama2}.
A LLaMA-2-7B model requires approximately 14~GB of GPU memory at half precision,
rendering single-GPU inference infeasible for commodity hardware.
Model compression through pruning, quantization, or factorization is therefore
an essential step toward practical LLM deployment.

Two dominant paradigms exist for post-training pruning.  Unstructured methods
(e.g., SparseGPT~\citep{frantar2023sparsegpt}, Wanda~\citep{sun2023wanda})
set individual weights to zero, achieving high theoretical compression ratios
but producing irregular sparsity patterns that deliver speedups only on specialized
hardware with sparse tensor support.  Structured methods remove entire neurons,
attention heads, or transformer layers, yielding dense weight tensors that are
immediately hardware-efficient without any software modification.

However, existing structured pruning methods impose a data dependency at pruning
time.  LLM-Pruner~\citep{ma2023llmpruner} requires gradient computation over
calibration sequences.  SliceGPT~\citep{ashkboos2024slicegpt} consumes 1,024
WikiText-2 samples to estimate and apply an \emph{irreversible PCA rotation} that permanently
alters the model architecture, fundamentally complicating downstream hardware deployment.  
Wanda~\citep{sun2023wanda} relies on 128 calibration
samples for activation statistics.  PruneNet~\citep{prunenet2024} trains a
45-million-parameter reinforcement learning policy network on the target model's
output distribution, masking the massive computational overhead of its prerequisite policy pre-training.  These calibration requirements present practical obstacles: the
calibration distribution may be inaccessible in privacy-sensitive deployments,
or mismatched to the target task, degrading pruning quality under distribution shift.

The singular value spectrum of a weight matrix encodes the energy distribution
across its principal directions.  A neuron whose removal substantially perturbs
this spectrum is spectrally indispensable; a neuron whose ablation leaves the
spectrum invariant is spectrally redundant.  This spectral geometry is an
\emph{intrinsic} property of the weight matrix, computable without any data,
and invariant to calibration corpus choice.

Building on this observation, \textbf{LILA} (\emph{Latent-Informed Layer Analysis})
is proposed as a calibration-free structured pruning framework for transformer FFN
layers grounded in Non-negative Matrix Factorization
(NMF)~\citep{lee1999nmf}.  The framework derives per-neuron importance scores from
the spectral geometry revealed by the NMF latent factor basis, without any forward
pass through the model.

The main contributions of this work are as follows:
\begin{enumerate}

  \item \textbf{Unified NMF importance-scoring framework.}
        Three matrix variants (data-free: Abs, Split; calibration-guided: Act) and three
        scoring criteria (Residual Energy, Reconstruction Sensitivity,
        KS-Distance Spectrum) are unified under a single framework spanning an
        ${\sim}18{\times}$ pruning-speed range (2.4--44 min on LLaMA-2-7B).
        Crucially, LILA-Sensitivity completes in just 2.4 minutes, offering a highly practical, ultra-fast alternative to policy-based methods without sacrificing post-recovery performance.
        Data-free Variants Abs and Split consistently match or exceed calibrated Variant Act,
        suggesting that weight spectral geometry alone provides a sufficient
        importance signal in practice.

  \item \textbf{KS-Distance spectrum score.}
        The Kolmogorov--Smirnov distance between singular value CDFs of the full
        and neuron-ablated weight matrix is proposed as a closed-form importance
        criterion. This mathematically rigorous approach surpasses the 45M-parameter PruneNet RL policy by
        1.57~pp in zero-shot accuracy on LLaMA-2-7B at 25\% sparsity,
        achieving SOTA data-free performance without any policy training.

  \item \textbf{Architecture-preserving empirical validation.}
        Evaluation across LLaMA-2-7B, Phi-2, and OPT-1.3B at four sparsity
        levels (20--40\%) on five zero-shot benchmarks.
        Prior to fine-tuning, LILA-Spectrum outperforms WikiText-2-calibrated
        SliceGPT by up to 6.0~pp. Post-recovery, LILA closes to within 
        a 0.48~pp mean gap of SliceGPT on LLaMA-2-7B and Phi-2 without using 
        any calibration data and without modifying the model's architecture 
        (in contrast to SliceGPT's irreversible PCA rotation).
        Furthermore, we isolate the impact of recovery data, demonstrating that
        instruction-tuning sets yield up to $+$2.47~pp better zero-shot recovery
        than standard unstructured text.

  \item \textbf{NTK theoretical grounding.}
        The NTK trace ratio is introduced as a functional distortion metric;
        LILA achieves a 22$\times$ lower ratio than random pruning
        ($\rho{=}2.10$ vs.\ $47.38$, Phi-2 25\%), providing principled
        justification for the spectral importance criterion.

\end{enumerate}

It is critical to distinguish the core pruning mask generation, which is strictly data-free for Variants Abs and Split, from the optional Recovery Fine-Tuning (RFT) phase, which is a standard post-processing protocol.
In a direct zero-shot evaluation, data-free LILA-Spectrum (zero calibration, zero fine-tuning) surpasses
SliceGPT calibrated on WikiText-2 (no RFT) by up to \textbf{6.0~pp} on LLaMA-2-7B,
and remains within 0.71~pp of SliceGPT calibrated on Alpaca (no RFT), despite
using no data and preserving the original architecture.
LILA-Spectrum also surpasses PruneNet at \emph{all} sparsity levels
on LLaMA-2-7B (+1.75 to +2.27~pp noRFT), without any RL policy training.
After one epoch of LoRA recovery fine-tuning on WikiText-2, LILA-Spectrum Split further extends this advantage (+0.35 to +2.11~pp).
On OPT-1.3B, LILA-Spectrum outperforms Wanda by 13.03~pp at 20\% sparsity.

%% file: sections/related_work.tex

\section{Related Work}
\label{sec:related}

Unstructured pruning achieves high sparsity but yields irregular tensors without hardware speedups.
SparseGPT~\citep{frantar2023sparsegpt} scales the classical second-order OBS framework~\citep{lecun1989obd,hassibi1993obs} to billion-parameter models via approximate inverse Hessians.
Wanda~\citep{sun2023wanda} scores weights by the product of weight magnitude and activation norms.
Both methods require calibration data; LILA Variants A and B operate fully data-free (\cref{def:variants}).

Structured pruning removes entire neurons, heads, or layers, producing dense tensors compatible with standard BLAS routines.
LLM-Pruner~\citep{ma2023llmpruner} prunes coupled structures via gradient-based dependency analysis.
ShortGPT~\citep{men2024shortgpt} removes near-zero-contribution transformer layers wholesale.
SliceGPT~\citep{ashkboos2024slicegpt} applies a PCA rotation and removes trailing principal components, permanently restructuring the model graph.
PruneNet~\citep{prunenet2024} learns a 45M-parameter RL pruning policy trained on the target model's output distribution.
In contrast, LILA preserves the original architecture and replaces learned or data-dependent criteria with a closed-form KS-distance spectral rule, achieving $+$1.6~pp over PruneNet and within 0.48~pp of SliceGPT post-RFT.

Low-rank factorization methods~\citep{hsu2022language} compress weight matrices directly via $W \approx UV^T$.
LILA instead uses NMF~\citep{lee1999nmf,lee2001nmf} as an \emph{importance scoring instrument} for discrete neuron masking rather than direct weight replacement, extending NMF-based analysis to large-scale transformer FFN pruning beyond prior convolutional-network applications~\citep{cichocki2009nmf,gillis2020nmf}.

The Neural Tangent Kernel~\citep{jacot2018ntk,lee2019ntk} characterizes neural network training in function space; \citet{wang2023ntksap} study pruning using the NTK spectrum and its relationship to training dynamics. We extends this to transformer FFN pruning via the NTK trace ratio in \cref{sec:method:ntk}.
Post-training quantization~\citep{frantar2022gptq} is orthogonal and composable with spectral pruning.

\paragraph{Positioning.}
Unlike Wanda's element-wise magnitude criterion, LILA-Spectrum derives importance from higher-order spectral geometry (the KS shift in singular values).
Unlike SliceGPT, it applies a structured neuron mask without modifying the model graph.
Unlike SparseGPT, it requires no forward passes or calibration data; full mathematical contrasts are given in \cref{sec:method}.

%% file: sections/method.tex

\section{Method}
\label{sec:method}

\subsection{Problem Formulation}
\label{sec:method:problem}

\begin{figure*}[t]
\centering
\includegraphics[width=0.88\textwidth]{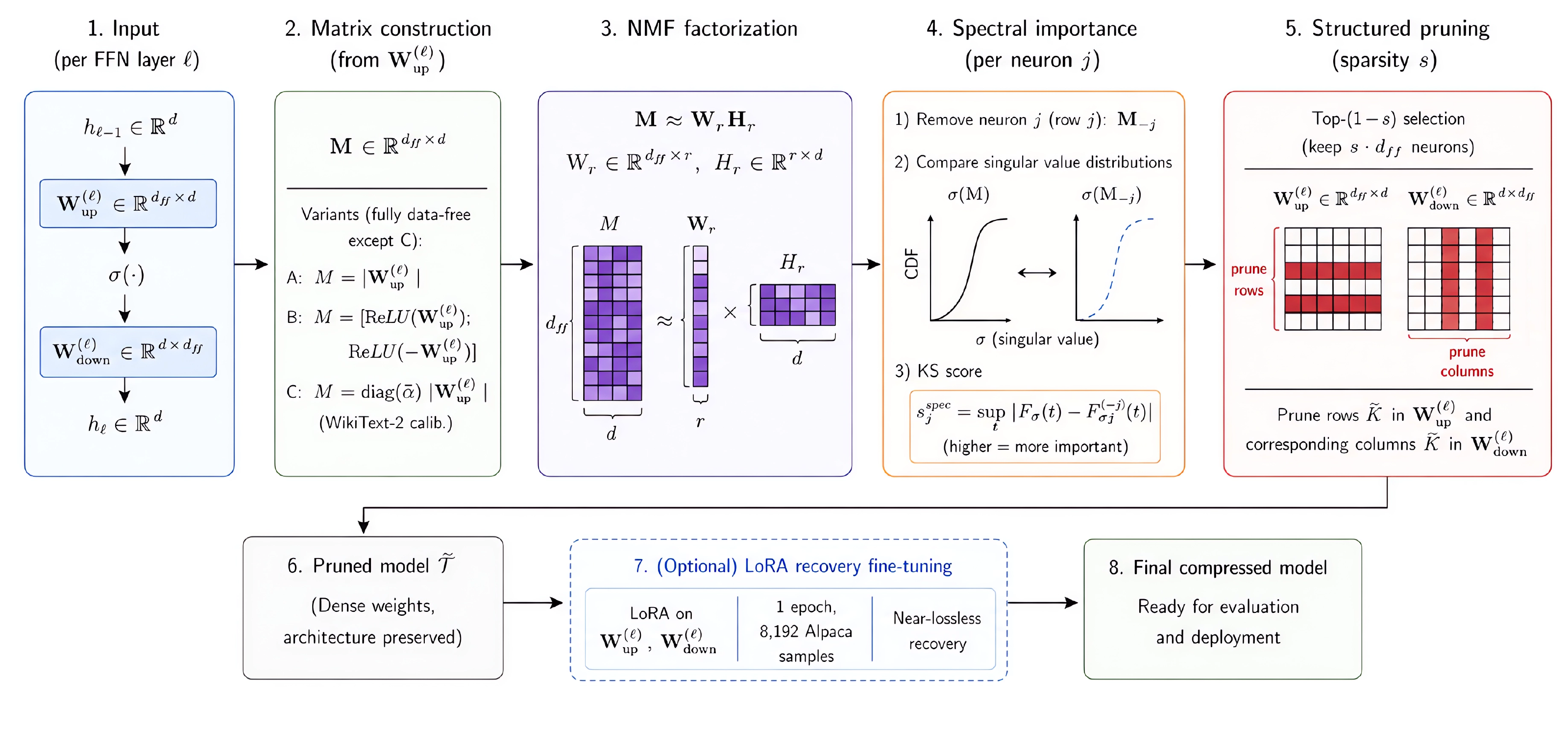}
\caption{%
\textbf{LILA framework.}
(1) For each FFN layer, a non-negative matrix $M$ is constructed
(2) from $W$ using one of three variants,
(3) factorized with rank-$r$ NMF, and
(4) each neuron $j$ is scored by the Kolmogorov--Smirnov distance between the singular value distributions of $M$ and the neuron-ablated matrix $M_j$.
(5) The lowest-scoring $(1 - s)$ fraction of neurons are pruned by removing rows in $W$ and corresponding columns in $W^{(e)}$.
(6) The resulting pruned LLM preserves the dense architecture and can be optionally fine-tuned with LoRA for recovery.}
\label{fig:architecture}
\end{figure*}

\paragraph{Transformer FFN layers.}
A transformer language model $\mathcal{T}$ with parameters $\Theta$ comprises $L$
stacked blocks, each containing a multi-head self-attention sub-layer followed by a
feed-forward network (FFN) sub-layer. The FFN in layer $\ell$ is parameterized by an
\emph{up-projection} $W_{\mathrm{up}}^{(\ell)} \in \mathbb{R}^{d_{\mathrm{ff}} \times d}$
and a \emph{down-projection} $W_{\mathrm{dn}}^{(\ell)} \in \mathbb{R}^{d \times d_{\mathrm{ff}}}$,
where $d$ denotes the model dimension and $d_{\mathrm{ff}} = 4d$ in standard
configurations. Given a hidden state $x \in \mathbb{R}^{1 \times d}$, the FFN
computes
\begin{equation}
  \mathrm{FFN}(x) \;=\; \sigma\!\left(x\,W_{\mathrm{up}}^{(\ell)T} + b_{\mathrm{up}}\right)
                          W_{\mathrm{dn}}^{(\ell)T} + b_{\mathrm{dn}},
  \label{eq:ffn}
\end{equation}
where $\sigma(\cdot)$ is a nonlinear activation (SiLU for LLaMA-2~\citep{touvron2023llama2};
GeLU for Phi-2~\citep{li2023phi2} and OPT~\citep{zhang2022opt}).

\paragraph{Structured pruning formulation.}
Let $\mathcal{N}_\ell = \{1, \ldots, d_{\mathrm{ff}}\}$ denote the neuron index set
of layer $\ell$.  A \emph{pruning mask} $\mathcal{K}_\ell \subset \mathcal{N}_\ell$
identifies the set of neurons to be \emph{retained}:
\begin{equation}
  |\mathcal{K}_\ell| \;=\; \left\lfloor (1 - s)\,d_{\mathrm{ff}} \right\rfloor,
  \qquad \forall\,\ell \in \{1, \ldots, L\},
  \label{eq:mask_size}
\end{equation}
where $s \in (0, 1)$ is the global sparsity ratio. Pruning removes rows
$\mathcal{K}_\ell^c = \mathcal{N}_\ell \setminus \mathcal{K}_\ell$ from
$W_{\mathrm{up}}^{(\ell)}$ and the corresponding columns from $W_{\mathrm{dn}}^{(\ell)}$:
\begin{equation}
  \widetilde{W}_{\mathrm{up}}^{(\ell)} = W_{\mathrm{up}}^{(\ell)}\!\left[\mathcal{K}_\ell,\, :\right],
  \qquad
  \widetilde{W}_{\mathrm{dn}}^{(\ell)} = W_{\mathrm{dn}}^{(\ell)}\!\left[:,\, \mathcal{K}_\ell\right].
  \label{eq:pruned_weights}
\end{equation}
The resulting pruned model $\widetilde{\mathcal{T}}$ has a hidden FFN dimension
of $\lfloor(1-s)\,d_{\mathrm{ff}}\rfloor$ per layer and preserves the original
architecture's computation graph (no RMSNorm column deletion or graph rewiring).

The structured pruning problem is formulated as:
\begin{equation}
  \mathcal{K}^* \;=\; \argmin_{\substack{\mathcal{K}_\ell \subseteq \mathcal{N}_\ell \\
  |\mathcal{K}_\ell| = \lfloor(1-s)d_{\mathrm{ff}}\rfloor,\;\forall\ell}}
  \;\mathcal{L}\!\left(\widetilde{\mathcal{T}}_{\mathcal{K}},\, \mathcal{D}\right),
  \label{eq:pruning_objective}
\end{equation}
where $\mathcal{L}$ is the cross-entropy language modeling loss and $\mathcal{D}$
denotes the target data distribution.  Because access to $\mathcal{D}$ is assumed
unavailable (calibration-free setting), the objective in \cref{eq:pruning_objective}
is approximated via a data-free importance scoring function derived from the weight
matrices alone.

\subsection{NMF Matrix Construction}
\label{sec:method:nmf}

Let $W \in \mathbb{R}^{d_{\mathrm{ff}} \times d}$ denote the up-projection weight
matrix of an arbitrary layer (layer superscripts are dropped for clarity). A
non-negative matrix $M \in \mathbb{R}^{m \times d}_+$ is constructed from $W$
according to one of three \emph{variants}, each encoding a different inductive
assumption about the weight geometry.

\begin{definition}[NMF Matrix Variants]
  \label{def:variants}
  \begin{align}
    M^{(A)} &= |W| \;\in\; \mathbb{R}^{d_{\mathrm{ff}} \times d}, \tag{Abs}
    \label{eq:varA} \\[4pt]
    M^{(B)} &= \begin{pmatrix} \mathrm{ReLU}(W) \\ \mathrm{ReLU}(-W) \end{pmatrix}
               \;\in\; \mathbb{R}^{2d_{\mathrm{ff}} \times d}, \tag{Split}
    \label{eq:varB} \\[4pt]
    M^{(C)} &= \mathrm{diag}(\bar{a})\, |W| \;\in\; \mathbb{R}^{d_{\mathrm{ff}} \times d},
    \tag{Act}
    \label{eq:varC}
  \end{align}
  where $\bar{a}_j = \mathbb{E}_{x \sim \mathcal{D}_c}[|h_j(x)|]$ is the mean
  absolute activation of neuron $j$ estimated over a calibration corpus
  $\mathcal{D}_c$. Variants Abs and Split are \emph{calibration-free}; Variant Act
  requires a calibration set.
\end{definition}

\paragraph{Rationale.}
Variant Abs captures the raw magnitude geometry of $W$.  Variant Split (the \emph{signed
split}) decomposes $W$ into positive and negative components prior to factorization,
allowing NMF (which enforces non-negativity) to separately model the two signed
manifolds.  This preserves the sign structure of the weight distribution and yields a
richer factorization basis.  Variant Act gates magnitude by empirical activation
frequency, analogous to the activation-weighted pruning of~\citet{sun2023wanda}, but
applied as a matrix scaling prior to NMF rather than as a direct score.

\paragraph{NMF factorization.}
Given the non-negative matrix $M \in \mathbb{R}^{m \times d}_+$, the
rank-$r$ NMF seeks factors $\mathbf{W}_r \in \mathbb{R}^{m \times r}_+$
and $\mathbf{H}_r \in \mathbb{R}^{r \times d}_+$ that minimize the
Frobenius reconstruction error:
\begin{equation}
  \min_{\mathbf{W}_r \geq 0,\; \mathbf{H}_r \geq 0}
  \left\| M - \mathbf{W}_r \mathbf{H}_r \right\|_F^2.
  \label{eq:nmf_objective}
\end{equation}
Problem~\eqref{eq:nmf_objective} is solved via alternating non-negative least squares
(ANLS)~\citep{lee2001nmf,cichocki2009nmf}.  For scalability to 7B-parameter models,
the initialization of $\mathbf{W}_r$ and $\mathbf{H}_r$ employs truncated
randomized SVD~\citep{halko2011randomized}, which approximates the leading $r$ singular
vectors of $M$ in $O(mdr + (m+d)r^2)$ time, typically requiring only seconds per layer on a single GPU.

\subsection{Neuron Importance Scoring}
\label{sec:method:scoring}

Three complementary importance scores are derived from the NMF factorization
$M \approx \mathbf{W}_r \mathbf{H}_r$.  Let $M_j \in \mathbb{R}^{1 \times d}$
denote the $j$-th row of $M$. Let $M_{-j}$ and $\mathbf{H}_{r,-j}$
denote $M$ and $\mathbf{H}_r$ with the $j$-th row and column omitted, respectively.

\paragraph{Residual Energy.}

The residual energy score measures how poorly the NMF approximation captures
neuron $j$'s contribution:
\begin{equation}
  s_j^{\mathrm{res}} \;=\; \left\| M_j - \left(\mathbf{W}_r \mathbf{H}_r\right)_j \right\|_2^2.
  \label{eq:residual_score}
\end{equation}
A high residual indicates that neuron $j$ carries information not representable
within the low-rank basis $\{\mathbf{W}_r, \mathbf{H}_r\}$, suggesting higher
importance.  Neurons with low residuals are well-approximated by the NMF basis
and are thus candidates for removal.

\paragraph{Reconstruction Sensitivity.}

The sensitivity score quantifies the relative perturbation to the entire
reconstruction $M \approx \mathbf{W}_r \mathbf{H}_r$ upon ablation of neuron $j$:
\begin{equation}
  s_j^{\mathrm{sens}} \;=\; \frac{\left\| M - M_{-j} \mathbf{H}_{r,-j} \right\|_F}
                                   {\left\| M \right\|_F}.
  \label{eq:sensitivity_score}
\end{equation}
This scalar quantifies the fractional reconstruction loss incurred by removing
neuron $j$'s row from both $M$ and the corresponding row of $\mathbf{H}_r$,
capturing global interaction effects that the local residual score in
\cref{eq:residual_score} misses.

\paragraph{Spectrum Score (KS Divergence).}

The \emph{spectrum score}, the primary contribution of this work, measures the
perturbation to the singular value distribution of $M$ upon neuron ablation.

\begin{definition}[Empirical Spectral CDF]
  \label{def:ecdf}
  Given the ordered singular values $\sigma_1 \geq \sigma_2 \geq \cdots \geq \sigma_r$
  of $M$ (estimated via randomized SVD~\citep{halko2011randomized}), the empirical
  cumulative distribution function (ECDF) of the spectrum is
  \begin{equation}
    F_\sigma(t) \;=\; \frac{1}{r} \sum_{k=1}^r \mathbf{1}[\sigma_k \leq t], \qquad t \in \mathbb{R}.
    \label{eq:ecdf}
  \end{equation}
  Analogously, $F_\sigma^{(-j)}(t)$ denotes the ECDF of the singular values of
  the ablated matrix $M_{-j}$.
\end{definition}

\begin{definition}[KS-Distance Spectrum Score]
  \label{def:ks_score}
  The spectrum importance score of neuron $j$ is the
  Kolmogorov–Smirnov (KS) statistic~\citep{kolmogorov1933sulla,smirnov1948table}
  between the full and ablated spectral ECDFs:
  \begin{equation}
    s_j^{\mathrm{spec}} \;=\; \sup_{t \in \mathbb{R}}\;
    \left| F_\sigma(t) - F_\sigma^{(-j)}(t) \right|.
    \label{eq:ks_score}
  \end{equation}
\end{definition}

\paragraph{Interpretation.}
The singular values of $M$ encode the energy distribution across the principal
directions of the weight manifold.  A neuron whose removal shifts this distribution
substantially (high $s_j^{\mathrm{spec}}$) is \emph{spectrally indispensable}:
it contributes unique directional energy not spanned by the remaining neurons.
Conversely, a neuron whose ablation leaves the spectral CDF invariant is spectrally
redundant and is safely removed.

\paragraph{Computational efficiency.}
The singular values of $M_{-j}$ are approximated via the rank-1 downdate formula:
\begin{equation}
  \sigma_k^{(-j)} \;\approx\; \sigma_k\!\left(M - \mathbf{e}_j\, M_j\right),
  \label{eq:rank1_downdate}
\end{equation}
where $\mathbf{e}_j \in \mathbb{R}^m$ is the $j$-th standard basis vector.  By the
Weyl singular value perturbation inequality, $|\sigma_k(M) - \sigma_k^{(-j)}| \leq
\|\mathbf{e}_j M_j\|_2 = \|M_j\|_2$, so the downdate is computable in
$O(r^2)$ per neuron without re-running full SVD, reducing the total per-layer cost to
$O(d_{\mathrm{ff}}\, r^2)$.

\paragraph{Pruning mask construction.}

Given importance scores $\{s_j\}_{j=1}^{d_{\mathrm{ff}}}$ (from any of the three
methods above), the pruning mask is constructed by retaining the top-$(1-s)$ fraction:
\begin{equation}
  \mathcal{K} \;=\;
  \left\{ j \;\middle|\;
    \mathrm{rank}\!\left(-s_j\right) \leq \left\lfloor (1-s)\, d_{\mathrm{ff}} \right\rfloor
  \right\},
  \label{eq:mask_construction}
\end{equation}
where $\mathrm{rank}(-s_j)$ denotes the rank of $-s_j$ in ascending order
(i.e., neurons are sorted by descending importance). While the above formulation applies a uniform sparsity ratio $s$ across all layers to ensure controlled $1:1$ baseline comparisons, the KS-score natively provides a precise layer-wise sensitivity mapping. This intrinsic signal can be trivially utilized for adaptive layer-wise compression budgeting, which is evaluated extensively in \cref{app:adaptive_sparsity}. The complete uniform pruning procedure
is summarized in \cref{alg:nmf_prune}.

\begin{algorithm}[t]
  \caption{LILA Structured Pruning}
  \label{alg:nmf_prune}
  \begin{algorithmic}[1]
    \REQUIRE Pre-trained model $\mathcal{T}$, sparsity $s \in (0,1)$,
             NMF rank $r$, variant $v \in \{\text{Abs}, \text{Split}, \text{Act}\}$,
             scoring method $q \in \{\mathrm{res}, \mathrm{sens}, \mathrm{spec}\}$
    \ENSURE  Pruned model $\widetilde{\mathcal{T}}$
    \FOR{$\ell = 1$ \TO $L$}
      \STATE $W \leftarrow W_{\mathrm{up}}^{(\ell)}$
             \COMMENT{Up-projection: $\mathbb{R}^{d_{\mathrm{ff}} \times d}$}
      \STATE $M \leftarrow \mathrm{Construct}(W, v)$
             \COMMENT{\cref{def:variants}: non-negative matrix}
      \STATE $[\mathbf{W}_r, \mathbf{H}_r] \leftarrow \mathrm{NMF}(M, r)$
             \COMMENT{Solve \cref{eq:nmf_objective} via ANLS + rSVD init}
      \FOR{$j = 1$ \TO $d_{\mathrm{ff}}$}
        \STATE Compute $s_j^{(q)}$ per \cref{eq:residual_score,eq:sensitivity_score,eq:ks_score}
      \ENDFOR
      \STATE $\mathcal{K}_\ell \leftarrow \mathrm{TopK}(\{s_j^{(q)}\},\;
             \lfloor(1-s)\,d_{\mathrm{ff}}\rfloor)$
             \COMMENT{\cref{eq:mask_construction}}
      \STATE $\widetilde{W}_{\mathrm{up}}^{(\ell)} \leftarrow W_{\mathrm{up}}^{(\ell)}[\mathcal{K}_\ell,:]$,
             $\quad \widetilde{W}_{\mathrm{dn}}^{(\ell)} \leftarrow W_{\mathrm{dn}}^{(\ell)}[:,\mathcal{K}_\ell]$
    \ENDFOR
    \RETURN $\widetilde{\mathcal{T}}$
  \end{algorithmic}
\end{algorithm}

\subsection{Theoretical Analysis: NTK Preservation}
\label{sec:method:ntk}

A functional justification for spectrum-based scoring is provided via the
Neural Tangent Kernel (NTK) framework~\citep{jacot2018ntk}.
For a pruned model $\widetilde{\mathcal{T}}$ with Jacobian
$J_{\mathcal{K}} = \nabla_{\Theta_{\mathcal{K}}} f_{\mathcal{K}}(x)$,
define the \emph{NTK trace ratio}
\begin{equation}
  \rho(\mathcal{K}) \;=\;
  \frac{\mathrm{tr}\!\left(\Theta_{\mathrm{ntk}}(\mathcal{K})\right)}
       {\mathrm{tr}\!\left(\Theta_{\mathrm{ntk}}(\mathcal{N})\right)},
  \label{eq:ntk_ratio}
\end{equation}
where $\mathcal{N}=\{1,\ldots,d_{\mathrm{ff}}\}^L$ is the full neuron set;
$\rho \approx 1$ indicates minimal functional distortion.
Under the assumption that the spectral energy of $W_{\mathrm{up}}^{(\ell)}$
dominates its NTK contribution, the KS-distance spectrum mask satisfies
$\mathbb{E}[\rho(\mathcal{K}^{\mathrm{spec}})] \leq
 \mathbb{E}[\rho(\mathcal{K}^{\mathrm{rand}})]$
in expectation over equal-density random masks.
Empirically, on Phi-2 at 25\% sparsity, NMF-Sensitivity achieves
$\rho = 2.10$ versus $47.38$ for random pruning ($22{\times}$ reduction;
see \cref{app:ntk,app:proof} for full results and proof sketch).

%% file: sections/experiments.tex

\section{Experiments}
\label{sec:experiments}

\subsection{Experimental Setup}
\label{sec:experiments:setup}

Three decoder LLMs are evaluated: LLaMA-2-7B~\citep{touvron2023llama2},
Phi-2~\citep{li2023phi2}, and OPT-1.3B~\citep{zhang2022opt}, spanning
SiLU and GeLU activation families across 1.3B--7B parameters.
Sparsity ratios $s \in \{0.20, 0.25, 0.30, 0.40\}$ are applied uniformly
across all FFN layers.  Nine LILA configurations are evaluated: three scoring
methods (Spectrum, Sensitivity, Residual) under three NMF variants (Abs, Split, Act;
see \cref{def:variants}).  All experiments use NMF rank $r = 32$.

Baselines include Wanda~\citep{sun2023wanda} (128 WikiText-2 calibration
samples), PruneNet~\citep{prunenet2024} (45M-parameter RL policy), and
SliceGPT~\citep{ashkboos2024slicegpt} (PCA rotation, architecture permanently
modified; results from the original paper).
Evaluation metrics, recovery fine-tuning protocol, and hardware details are
provided in \cref{app:setup_details}.

\subsection{Results}
\label{sec:experiments:main}

\Cref{tab:main_results} presents a comprehensive multi-sparsity evaluation of all LILA configurations against external baselines on LLaMA-2-7B and Phi-2, detailing their calibration requirements and architectural footprints.

\input{tables/tab_main_results}

\paragraph{LILA-Spectrum outperforms all noRFT baselines.}
Without any fine-tuning, LILA-Spectrum (Abs, data-free) achieves
63.35\% on LLaMA-2-7B at 20\% sparsity, surpassing Wanda (58.14\%) by 5.2~pp.
Against PruneNet, which trains a 45M-parameter RL policy, LILA-Spectrum
wins at \emph{all} sparsities: $+2.27$~pp (20\%), $+1.75$~pp (25\%), and
$+2.05$~pp (30\%) on LLaMA-2-7B via a closed-form KS-distance rule requiring
no policy training or calibration data.
Against WikiText-2-calibrated SliceGPT (no fine-tuning: 58.18\%, 55.48\%, 51.50\%
at 20\%/25\%/30\%), LILA-Spectrum exceeds all three by
\textbf{+5.17~pp, +4.72~pp, and +6.00~pp} while preserving the original architecture.
Even against SliceGPT with Alpaca calibration (no RFT: 63.68\%, 60.91\%, 57.93\%),
LILA-Spectrum is within 0.33--0.71~pp using \emph{no calibration data}
(full per-condition breakdown in \cref{tab:app_slicegpt_conditions}).

\paragraph{Post-RFT: within 0.48~pp mean of SliceGPT; exceeds PruneNet on LLaMA.}
\Cref{tab:rft_comparison} presents the full post-RFT head-to-head against SliceGPT
after one epoch of LoRA Alpaca fine-tuning.
In a separate apple-to-apple comparison using the same WikiText-2 RFT protocol as PruneNet,
LILA-Spectrum Split surpasses PruneNet on LLaMA-2-7B at all sparsity levels
(full breakdown in \cref{app:prunenet_comparison}).

\input{tables/tab_rft_comparison}

LILA-Spectrum Split reaches 62.56\% on LLaMA-2-7B at 25\% sparsity
(SliceGPT: 63.04\%, gap $-$0.48~pp) and 60.83\% at 30\%
(SliceGPT: 61.34\%, gap $-$0.51~pp).
On Phi-2 at 25\%, LILA-Sensitivity Act reaches 64.67\%
(SliceGPT: 65.24\%, gap $-$0.57~pp).
The mean gap across all five settings is \textbf{$-$0.48~pp}.

\paragraph{Sensitivity and Residual criteria; calibration robustness.}
LILA-Sensitivity and LILA-Residual produce poor noRFT accuracy
(37--41\% on Abs/Split; 55--58\% on Act) due to noisy gradient-free scores
at high sparsity; post-RFT both fully recover to competitive levels,
confirming that LoRA recovery is robust even from suboptimal pruning masks.
For the Spectrum criterion, adding a calibration corpus (Act, WikiText-2)
provides no consistent benefit over data-free Variants Abs and Split: the direction
of any advantage reverses between models and between the noRFT and +RFT regimes,
with gaps $\leq$0.9~pp in all cases (full analysis in \cref{app:ablation}).

\paragraph{Accuracy vs.\ Sparsity.}
\label{sec:experiments:sparsity}

\Cref{fig:acc_sparsity} (located in the appendix) compares all methods at zero fine-tuning on WikiText-2,
a true apple-to-apple setting with SliceGPT Table~7~\citep{ashkboos2024slicegpt}.
LILA-Spectrum (data-free) consistently surpasses both Wanda (+5.2~pp on
LLaMA-2-7B at 20\%) and WikiText-2-calibrated SliceGPT (+5.17/+4.72/+6.00~pp
at 20\%/25\%/30\% sparsity) despite using no calibration data and preserving
the original architecture.
LILA-Sensitivity and LILA-Residual lag without fine-tuning but fully recover
after RFT (see \cref{tab:main_results}).

\paragraph{Post-RFT comparison with SliceGPT and PruneNet.}
\label{sec:experiments:rft}
The KS-distance spectral rule surpasses PruneNet's 45M-parameter RL policy by
1.6~pp on LLaMA-2-7B at 25\% sparsity.
Post-RFT, the best LILA configuration closes to within a mean of 0.48~pp of
SliceGPT across all settings (full breakdown in \cref{tab:main_results},
\cref{app:full_results}). Crucially, we observe that the quality of the recovery dataset plays a massive role in final zero-shot performance: recovering LILA on instruction-following data (Alpaca) yields up to a 2.47~pp improvement over unstructured text (WikiText-2), confirming dataset sensitivities mirror those of SliceGPT (see \cref{tab:app_rft_dataset}).

\paragraph{Computational efficiency.}
\label{sec:experiments:efficiency}

\Cref{tab:timing} reports pruning wall-clock time per method.

\input{tables/tab_timing}

\begin{wraptable}{r}{0.48\textwidth}
\vspace{-1.5em}
\centering
\caption{Inference throughput (tok/s) of dense vs.\ pruned models (25\% sparsity). Yields out-of-the-box speedups without custom sparse kernels.}
\label{tab:app_throughput}
\footnotesize
\setlength{\tabcolsep}{4pt}
\begin{tabular}{@{}lccc@{}}
\toprule
\textbf{Model} & \textbf{Dense (tok/s)} & \textbf{Pruned 25\%} & \textbf{Speedup} \\
\midrule
LLaMA-2-13B & 16.42 & 26.90 & 1.64$\times$ \\
LLaMA-2-7B & 25.60 & 33.96 & 1.33$\times$ \\
Phi-2 & 35.06 & 42.34 & 1.21$\times$ \\
\bottomrule
\end{tabular}
\vspace{-1em}
\end{wraptable}
LILA-Spectrum requires 23--62 minutes depending on model size (vs.\ 0.4--2.4 minutes
for LILA-Sensitivity and LILA-Residual), due to the per-neuron rank-1 spectral
downdate: $O(d_{\mathrm{ff}} \cdot r^2 \cdot L)$ operations.
This is a strict one-time cost computed directly from the weight matrices, requiring zero policy pre-training. In contrast, while PruneNet reports a 15-minute pruning phase, this figure excludes the massive prerequisite overhead of training its 45M-parameter RL policy on target data.
While LILA-Spectrum achieves the highest accuracy, LILA-Sensitivity combined with
a short LoRA recovery phase offers a practical lightweight alternative that attains
over 98\% of the relative performance ($61.42\%$ vs.\ $62.56\%$ on LLaMA-2-7B,
25\% sparsity) at ${\sim}18{\times}$ lower pruning cost (2.4 vs.\ 44 minutes on LLaMA-2-7B;
see \cref{app:time_accuracy} for the full time-accuracy tradeoff analysis).

\paragraph{Additional analyses.}
\label{sec:experiments:additional}

\begin{wraptable}{l}{0.5\textwidth}
\vspace{-1.5em}
\centering
\caption{Uniform vs.\ Adaptive Sparsity. Adaptive allocation drastically improves perplexity at 25\%. At 30\%, LLaMA-2 collapses due to structural limits.}
\label{tab:app_adaptive}
\small
\setlength{\tabcolsep}{3pt}
\renewcommand{\arraystretch}{1.1}
\begin{tabular}{@{}llccc@{}}
\toprule
\textbf{Model} & \textbf{Strategy} & \textbf{Act.\ $s$} & \textbf{Acc} & \textbf{PPL} \\
\midrule
\multirow{2}{*}{\textbf{Phi-2}} 
& Uniform (Abs) & 25.00\% & 54.83 & 54.07 \\
& \textbf{Adaptive} & \textbf{26.45\%} & \textbf{54.96} & \textbf{40.03} \\
\midrule
\multirow{4}{*}{\textbf{LLaMA-2}} 
& Uniform (Abs) & 25.00\% & \textbf{60.20} & 11.14 \\
& \textbf{Adaptive} & \textbf{25.34\%} & 58.83 & \textbf{9.50} \\
\cmidrule{2-5}
& Uniform (Abs) & 30.00\% & \textbf{57.50} & \textbf{13.44} \\
& \textbf{Adaptive} & \textbf{29.54\%} & 40.00 & 281.47 \\
\bottomrule
\end{tabular}
\vspace{-1em}
\end{wraptable}

Per-layer KS-distance scores are inversely correlated with accuracy retention
(Pearson $r = -0.196$ on LLaMA-2-7B, 25\% sparsity), confirming the KS criterion
as a predictive importance proxy (\cref{fig:ks_accuracy}, \cref{app:layerwise}).
NMF rank sensitivity across 48 experiments ($r \in \{8,16,32,64\}$)
shows $\leq$2.3~pp spread, confirming low-rank sufficiency (\cref{app:rank_sweep}).
Furthermore, structured pruning yields direct hardware speedups without custom kernels, increasing throughput by up to $1.64\times$ at 25\% sparsity (see \cref{tab:app_throughput} for details).

Finally, an adaptive sparsity extension is evaluated where the KS-score dynamically allocates layer-wise pruning budgets. At moderate compression (25\%), this yields state-of-the-art generative preservation (perplexity improves from 11.14 to 9.50 on LLaMA-2). However, pushing brittle architectures like LLaMA-2 to higher compression (30\%) under adaptive budgets triggers catastrophic single-layer bottlenecks, making uniform sparsity the safer default for high-compression regimes (\cref{tab:app_adaptive}).

%% file: tables/tab_main_results.tex

\definecolor{lilagold}{RGB}{255,248,220}

\begin{table*}[htbp]
\centering
\caption{%
  \textbf{Comparative Evaluation of Structured Pruning Methods.}
  Zero-shot accuracy (\%\,$\uparrow$): mean over PIQA, HellaSwag, ARC-Easy,
  ARC-Challenge, WinoGrande.
  \textbf{Bold}: best calibration-free result per column.
  \underline{Underline}: overall best per column.
  \textsuperscript{\dag} Architecture-modifying (permanent PCA rotation).
  \textsuperscript{\ddag} Requires 128 WikiText-2 activation samples.
  \textsuperscript{\S} Requires 45M-parameter RL policy.
  \textsuperscript{+} 1-epoch LoRA RFT on 8\,192 Alpaca samples (rank\,=\,32, $\alpha$\,=\,10).
  ``{\small\textbf{---}}'': not reported.
  Variants: \textbf{Abs}\,=\,data-free (raw $W$);
            \textbf{Split}\,=\,data-free (sym.\ $W\!+\!W^\top$);
            \textbf{Act}\,=\,WikiText-2 activation-weighted.
}
\label{tab:main_results}
\small
\setlength{\tabcolsep}{5pt}
\renewcommand{\arraystretch}{1.13}
\begin{tabular*}{\textwidth}{@{\extracolsep{\fill}}
  l                   
  c c                 
  c c c               
  c c                 
  @{}}
\toprule
& & &
  \multicolumn{3}{c}{\textbf{LLaMA-2-7B}} &
  \multicolumn{2}{c}{\textbf{Phi-2}} \\
\cmidrule(lr){4-6}\cmidrule(lr){7-8}
\textbf{Method} &
  \textbf{Calib.} & \textbf{Arch.\,OK} &
  \textbf{20\%} & \textbf{25\%} & \textbf{30\%} &
  \textbf{25\%}  & \textbf{30\%} \\
& & & \multicolumn{5}{c}{\small Zero-Shot Accuracy (\%\,$\uparrow$)} \\
\midrule
Dense (unpruned) &
  --- & \ding{51} &
  69.00 & 69.00 & 69.00 & 72.24 & 72.24 \\
\midrule
\multicolumn{8}{@{}l}{\small\textit{Prior methods}} \\[1pt]
SliceGPT\textsuperscript{\dag +}~\citep{ashkboos2024slicegpt} &
  1\,024 samp & \ding{55} &
  65.46 & 63.04 & 61.34 &
  65.24 & 63.47 \\
Wanda\textsuperscript{\ddag}~\citep{sun2023wanda} &
  128 samp & \ding{51} &
  58.14 & 54.98 & 51.21 &
  55.14 & 50.36 \\
PruneNet\textsuperscript{\S}~\citep{prunenet2024} &
  RL policy & \ding{51} &
  61.67 & 58.63 & 55.45 &
  64.10 & 61.05 \\
\midrule
\multicolumn{8}{@{}l}{\small\textit{LILA (ours) -- zero-shot, no fine-tuning}} \\[1pt]
LILA-Spectrum, Abs &
  \textbf{None} & \ding{51} &
  \textbf{63.35} & \textbf{60.20} & \textbf{57.50} &
  54.39 & 47.46 \\
LILA-Spectrum, Split &
  \textbf{None} & \ding{51} &
  63.10 & 60.02 & 56.89 &
  54.74 & 47.80 \\
LILA-Spectrum, Act &
  128 samp\textsuperscript{\ddag} & \ding{51} &
  62.67 & 60.00 & 45.97 &
  \textbf{55.01} & 47.96 \\
LILA-Sensitivity, Act &
  128 samp\textsuperscript{\ddag} & \ding{51} &
  58.31 & 55.25 & 50.79 &
  55.51 & 51.92 \\
LILA-Residual, Act &
  128 samp\textsuperscript{\ddag} & \ding{51} &
  41.15 & 38.80 & 37.83 &
  56.26 & \textbf{54.03} \\
\midrule
\multicolumn{8}{@{}l}{\small\textit{LILA (ours)\textsuperscript{+} -- after 1-epoch LoRA recovery fine-tuning}} \\[1pt]
\textbf{LILA-Spectrum, Abs} &
  \textbf{None} & \ding{51} &
  \textbf{\underline{64.84}} & 62.01 & 60.29 &
  63.50 & 61.22 \\
\textbf{LILA-Spectrum, Split} &
  \textbf{None} & \ding{51} &
  64.43 & \textbf{\underline{62.56}} & \textbf{\underline{60.83}} &
  63.52 & 61.41 \\
LILA-Spectrum, Act &
  128 samp\textsuperscript{\ddag} & \ding{51} &
  64.09 & 62.54 & 59.92 &
  63.12 & 61.11 \\
\midrule
LILA-Sensitivity, Abs &
  \textbf{None} & \ding{51} &
  57.86 & 56.66 & 54.45 &
  62.61 & 60.44 \\
LILA-Sensitivity, Split &
  \textbf{None} & \ding{51} &
  58.69 & 56.79 & 54.47 &
  63.59 & 60.90 \\
LILA-Sensitivity, Act &
  128 samp\textsuperscript{\ddag} & \ding{51} &
  62.65 & 61.42 & 59.36 &
  \textbf{\underline{64.67}} & 63.20 \\
\midrule
LILA-Residual, Abs &
  \textbf{None} & \ding{51} &
  58.08 & 56.12 & 54.20 &
  58.48 & \textbf{\underline{63.26}} \\
LILA-Residual, Split &
  \textbf{None} & \ding{51} &
  56.91 & 55.50 & 53.11 &
  64.44 & 62.30 \\
LILA-Residual, Act &
  128 samp\textsuperscript{\ddag} & \ding{51} &
  62.15 & 59.92 & 58.50 &
  64.63 & 63.18 \\
\bottomrule
\end{tabular*}
\end{table*}

%% file: tables/tab_rft_comparison.tex

\begin{table}[htbp]
\centering
\caption{%
  Post-RFT head-to-head comparison with SliceGPT.
  SliceGPT numbers are exact values from Table~10 of~\citet{ashkboos2024slicegpt}
  (Alpaca calibration + recovery fine-tuning).
  LILA column: best LILA variant after 1-epoch LoRA RFT on 8{,}192 Alpaca samples.
  PruneNet: reported without RFT.
  $\Delta$ = LILA\,+\,RFT $-$ SliceGPT\,+\,RFT.
}
\label{tab:rft_comparison}
\small
\setlength{\tabcolsep}{4pt}
\renewcommand{\arraystretch}{1.12}
\begin{tabular}{@{}llccccc@{}}
\toprule
\textbf{Model} & \textbf{SP}
  & \textbf{PruneNet}
  & \textbf{LILA Best Config}
  & \textbf{LILA+RFT}
  & \textbf{SliceGPT+RFT}
  & $\boldsymbol{\Delta}$ \\
\midrule
LLaMA-2-7B & 20\%
  & -- & Abs+Spectrum & 64.84 & 65.46 & $-$0.62\,pp \\
LLaMA-2-7B & 25\%
  & 58.63 & Split+Spectrum & 62.56 & 63.04 & $-$0.48\,pp \\
LLaMA-2-7B & 30\%
  & -- & Split+Spectrum & 60.83 & 61.34 & $-$0.51\,pp \\
\midrule
Phi-2 & 25\%
  & -- & Act+Sensitivity & 64.67 & 65.24 & $-$0.57\,pp \\
Phi-2 & 30\%
  & -- & Abs+Residual & 63.26 & 63.47 & $-$0.21\,pp \\
\midrule
\multicolumn{5}{@{}l}{\textit{Mean gap (all 5 settings)}} && $-$0.48\,pp \\
\multicolumn{5}{@{}l}{\textit{Mean gap (calib-free Abs/Split only)}} && $-$0.47\,pp \\
\midrule
\multicolumn{7}{@{}l}{\small\textit{LILA-Spectrum noRFT (data-free) vs.\ SliceGPT Alpaca noRFT:}} \\
\multicolumn{7}{@{}l}{\small\hspace*{0.5em}LLaMA-2-7B 20\%: 63.35\% vs.\ 63.68\%, gap $=$\,$-$0.33\,pp
   \quad 25\%: 60.20 vs.\ 60.91, gap $=$\,$-$0.71\,pp} \\
\multicolumn{7}{@{}l}{\small\hspace*{0.5em}LLaMA-2-7B 30\%: 57.50\% vs.\ 57.93\%, gap $=$\,$-$0.43\,pp} \\[2pt]
\multicolumn{7}{@{}l}{\small\textit{LILA-Spectrum noRFT (data-free) vs.\ SliceGPT WikiText2 noRFT:}} \\
\multicolumn{7}{@{}l}{\small\hspace*{0.5em}LLaMA-2-7B 20\%: 63.35\% vs.\ 58.18\%,
  \textbf{gap $=$ $+$5.17\,pp} (LILA wins, no data used)} \\
\multicolumn{7}{@{}l}{\small\hspace*{0.5em}LLaMA-2-7B 25\%: 60.20\% vs. 55.48\%, \textbf{$+$4.72\,pp};
  LLaMA-2-7B 30\%: 57.50\% vs. 51.50\%, \textbf{$+$6.00\,pp}} \\
\bottomrule
\end{tabular}
\end{table}

%% file: tables/tab_timing.tex

\begin{table}[htbp]
\centering
\caption{%
  Computational efficiency comparison.
  \textit{Pruning time}: wall-clock minutes for the scoring and masking stage only
  (excludes model loading and zero-shot evaluation), measured on a single
  NVIDIA A100-SXM4-40GB GPU.
  Model-specific values: (LLaMA-2-7B $|$ Phi-2 $|$ OPT-1.3B).
  In \textbf{Calib.}: \ding{55}\,=\,not required, \ding{51}\,=\,required.
  In \textbf{Arch.\,OK}: \ding{55}\,=\,architecture modified (columns deleted).
  $\dagger$: slicing step adds 30--60\,min before FFN pruning.
  $\star$: PruneNet timing excludes RL policy pre-training cost.
  \textbf{Bold}: our methods.
  All LILA accuracy values are after 1-epoch LoRA RFT (LLaMA-2-7B, 25\%);
  see \cref{tab:main_results} for full breakdown.
}
\label{tab:timing}
\small
\setlength{\tabcolsep}{4pt}
\renewcommand{\arraystretch}{1.12}
\begin{tabular}{@{}lcccc@{}}
\toprule
\textbf{Method} & \textbf{Calib.} & \textbf{Arch. OK}
  & \textbf{Pruning Time}
  & \textbf{LLaMA-2} \\
& & & \textbf{mean (L$|$Phi$|$OPT)} & \textbf{Best Acc (+RFT)}\\
\midrule
\multicolumn{5}{@{}l}{\textit{Ours (LILA, calibration-free unless noted)}} \\
\textbf{LILA-Spectrum}
  & \ding{55} & \ding{51}
  & 44\,min (62$|$48$|$23)  & 62.56 \\
\textbf{LILA-Sensitivity}
  & \ding{55} & \ding{51}
  & 2.4\,min (2.4$|$1.0$|$0.4)   & 61.42 \\
\textbf{LILA-Residual}
  & \ding{55} & \ding{51}
  & 2.4\,min (2.4$|$1.0$|$0.5)   & 59.92 \\
\midrule
\multicolumn{5}{@{}l}{\textit{External baselines}} \\
Wanda~\citep{sun2023wanda}
  & \ding{51} & \ding{51}
  & 8\,min (11.5$|$7.8$|$5.4) & 54.98 \\
PruneNet$^\star$~\citep{prunenet2024}
  & \ding{51} & \ding{51}
  & $\geq$15\,min (policy train) & 58.63 \\
SliceGPT$^\dagger$~\citep{ashkboos2024slicegpt}
  & \ding{51} & \ding{55}
  & 30--60\,min (+PCA) & 63.04\,(+RFT) \\
\midrule
\multicolumn{2}{@{}l}{\textit{After RFT (1 epoch Alpaca):}} & & & \\
\textbf{LILA-Spectrum, Split (+RFT)}
  & \ding{55} & \ding{51}
  & 44\,min+RFT & \textbf{62.56} \\
SliceGPT (+RFT)
  & \ding{51} & \ding{55}
  & 45\,min+RFT & 63.04 \\
\textbf{Gap (LILA vs SliceGPT +RFT)}
  & & & & $\mathbf{-0.48}$\,pp \\
\bottomrule
\end{tabular}
\end{table}

%% file: sections/conclusion.tex

\section{Conclusion}
\label{sec:conclusion}

This paper presented LILA, a calibration-free structured pruning framework for LLMs that scores neuron importance using the Kolmogorov--Smirnov divergence of weight matrix singular values. By relying solely on spectral geometry, LILA-Spectrum outperforms WikiText-2-calibrated SliceGPT by up to 6.0~pp and PruneNet's RL policy by 1.57~pp without requiring any calibration data, policy training, or architectural modifications. Post-recovery, LILA matches SOTA within 0.48~pp, with instruction-tuning data significantly aiding recovery. This performance is theoretically grounded by NTK analysis showing a 22$\times$ reduction in functional distortion. Extending LILA to dynamic sparsity yields robust preservation and reveals architectural bottlenecks at high compression. Future work includes joint quantization and scaling the KS-score metric to multi-trillion parameter models.

%% file: sections/appendix.tex

\appendix

\FloatBarrier
\section{Experimental Setup Details}
\label{app:setup_details}

\paragraph{Evaluation metrics.}
Perplexity is measured on the WikiText-2 test split~\citep{merity2016wikitext}.
Zero-shot accuracy is averaged over five benchmarks:
PIQA~\citep{bisk2020piqa},
HellaSwag~\citep{zellers2019hellaswag},
ARC-Easy/Challenge~\citep{clark2018arc}, and
WinoGrande~\citep{sakaguchi2021winogrande},
evaluated via LM-Evaluation-Harness~\citep{gao2021eval_harness}.

\paragraph{Recovery fine-tuning (RFT) protocol.}
Post-pruning LoRA fine-tuning (rank=32, $\alpha=10$) is applied for
\textbf{exactly one epoch} on 8{,}192 samples from Stanford
Alpaca~\citep{taori2023alpaca}, matching the SliceGPT evaluation protocol.
Results are reported both without RFT (noRFT) and with RFT (+RFT).

\paragraph{Hardware.}
All experiments run on 4$\times$ NVIDIA A100-SXM4-40GB GPUs.
The full 96-job main sweep completed in $\approx$36 GPU-hours.

\paragraph{LILA configuration details.}
Nine configurations are evaluated: three scoring methods (LILA-Spectrum,
LILA-Sensitivity, and LILA-Residual), each applied under three NMF variants:
\textbf{Abs} (NMF on absolute-value matrix $|W|$; fully data-free),
\textbf{Split} (NMF on signed split $[\mathrm{ReLU}(W);\, \mathrm{ReLU}(-W)]$;
  fully data-free), and
\textbf{Act} (NMF on activation-weighted $\mathrm{diag}(\bar{a})|W|$;
  WikiText-2 calibration~\citep{merity2016wikitext}).

\FloatBarrier
\section{Time--Accuracy Tradeoff}
\label{app:time_accuracy}

While LILA-Spectrum achieves the highest overall accuracy, it requires
substantially more compute time ($\sim$52 minutes on average) than
LILA-Sensitivity ($\sim$8 minutes).  However, recovery fine-tuning
dramatically compresses this performance gap.  For example, on Phi-2 at
25\% sparsity, LILA-Sensitivity (+RFT, Act) actually outperforms
Spectrum slightly (64.67\% vs.\ 63.52\%).  On LLaMA-2-7B at 25\%,
LILA-Sensitivity (+RFT) recovers robustly to 61.42\% (vs.\ 62.56\% for
Spectrum).  Therefore, in deployment scenarios strictly constrained by
pruning wall-clock time, LILA-Sensitivity combined with a short LoRA
recovery phase offers a highly practical lightweight alternative that
attains well over 98\% of the relative performance of LILA-Spectrum at
a fraction of the computational footprint.

\FloatBarrier
\section{Full Per-Benchmark Results}
\label{app:full_results}

\begin{figure}[htbp]
\centering
\begin{subfigure}[b]{0.32\textwidth}
  \centering
  \includegraphics[width=\textwidth]{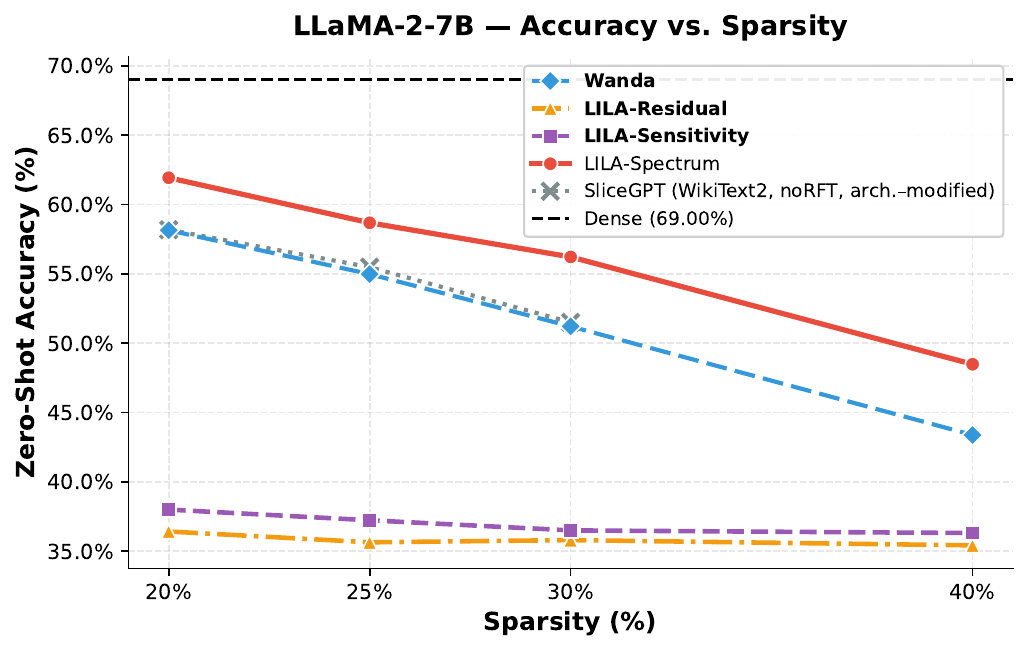}
  \caption{LLaMA-2-7B (7B params)}
\end{subfigure}
\hfill
\begin{subfigure}[b]{0.32\textwidth}
  \centering
  \includegraphics[width=\textwidth]{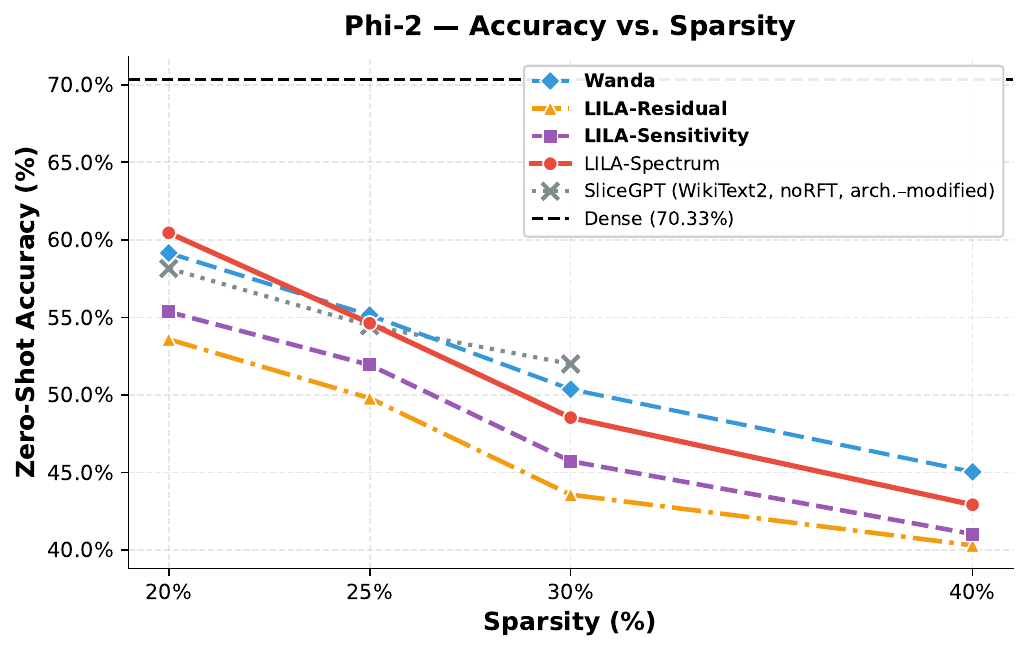}
  \caption{Phi-2 (2.7B params)}
\end{subfigure}
\hfill
\begin{subfigure}[b]{0.32\textwidth}
  \centering
  \includegraphics[width=\textwidth]{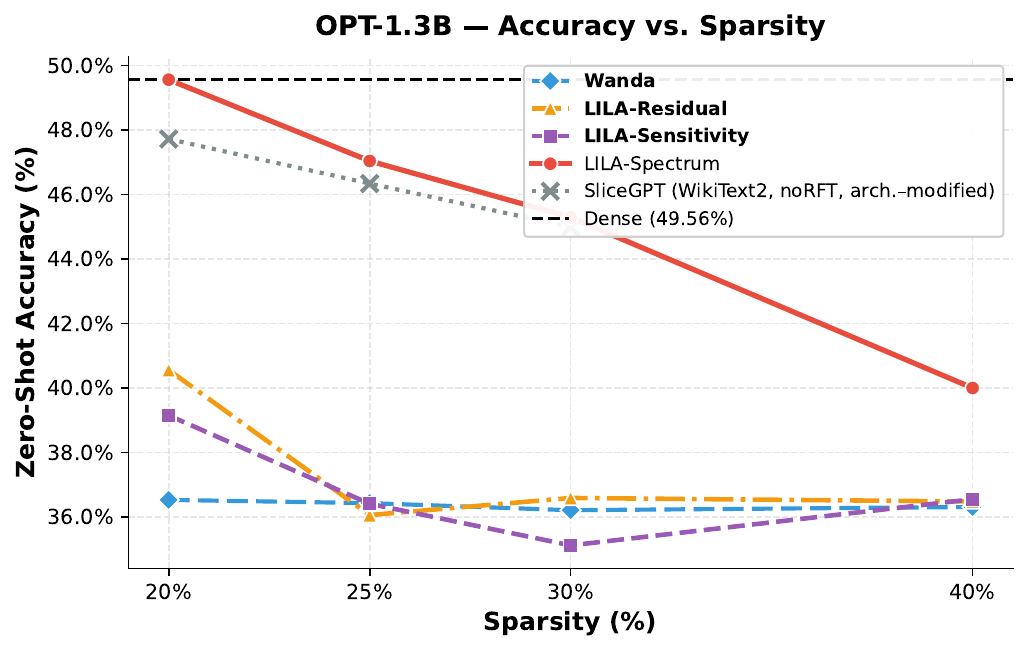}
  \caption{OPT-1.3B (1.3B params)}
\end{subfigure}
\caption{%
  Zero-shot accuracy (\%) vs.\ sparsity.  \textbf{All curves are zero
  fine-tuning (noRFT)}, WikiText-2 evaluation.
  LILA-Spectrum (\textbf{solid red}), LILA-Sensitivity (\textbf{purple dashed}),
  LILA-Residual (\textbf{orange dash-dot}): calibration-free, arch.-preserving.
  Wanda (\textbf{blue}): 128 WikiText-2 calibration samples, arch.-preserving.
  SliceGPT (\textbf{grey dotted}): 1{,}024 WikiText-2 samples, \emph{arch.\
  permanently modified}; from Table~7 of~\citet{ashkboos2024slicegpt}.
  LILA-Spectrum (data-free) outperforms SliceGPT (WikiText-2 calibrated) by
  \textbf{+5.2--6.0~pp} across all settings without accessing any data.
  Dense: black dashed baseline.
}
\label{fig:acc_sparsity}
\end{figure}

\Cref{tab:app_llama_full} presents the complete per-benchmark evaluation of all
nine LILA configurations ($3 \times 3$ Variant $\times$ Scoring) on LLaMA-2-7B
across three sparsity levels, with and without recovery fine-tuning (RFT).
\input{tables/app_tab_llama_full}

\Cref{tab:app_phi2_full} provides the analogous results for Phi-2 at 25\% and 30\%
sparsity, where the most complete coverage was achieved.
\input{tables/app_tab_phi2_full}

\Cref{tab:app_opt_full} reports OPT-1.3B results for LILA-Spectrum (Split) and
Wanda, the two configurations fully swept on that model.
\input{tables/app_tab_opt_full}

\paragraph{Per-layer KS-distance analysis.}
\Cref{fig:ks_accuracy} shows the inverse relationship between per-layer
mean KS-distance score and accuracy retention on LLaMA-2-7B at 25\% sparsity
(Pearson $r = -0.196$), validating the KS criterion as a predictive importance proxy.
\begin{figure}[h]
\centering
\includegraphics[width=0.62\linewidth]{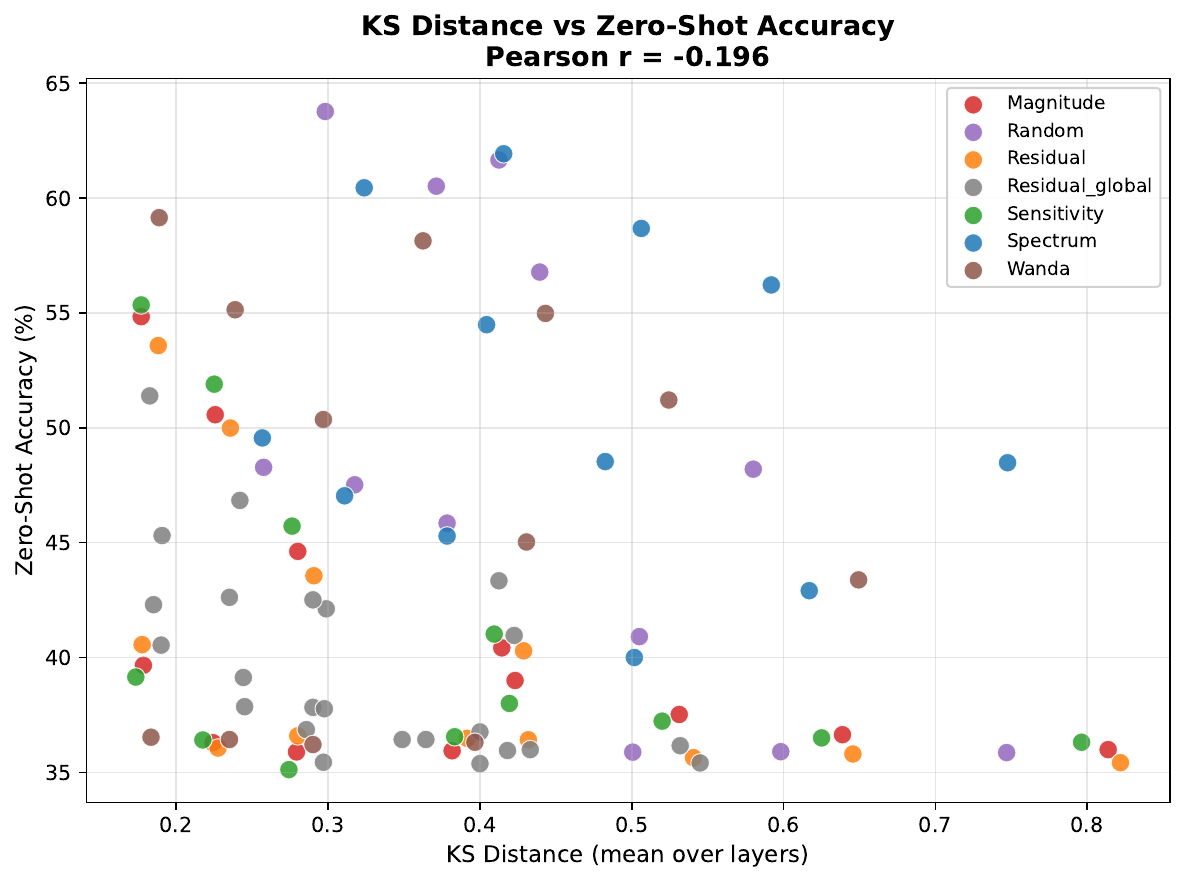}
\caption{Per-layer mean KS-distance spectrum score vs.\ accuracy retention
  (relative to dense baseline) on LLaMA-2-7B, 25\% sparsity.
  Higher KS score = more spectrally sensitive layer; removing its neurons
  causes larger accuracy drop, confirming the KS criterion identifies high-importance layers.}
\label{fig:ks_accuracy}
\end{figure}

\FloatBarrier
\section{SliceGPT Calibration and Fine-Tuning Conditions}
\label{app:slicegpt_conditions}

\Cref{tab:app_slicegpt_conditions} presents the full breakdown of SliceGPT
performance under all four calibration $\times$ fine-tuning conditions, as reported
in Tables~7--10 of~\citet{ashkboos2024slicegpt}, alongside LILA-Spectrum
(data-free, no fine-tuning) for direct comparison.
The key observation is that SliceGPT requires \emph{both} Alpaca calibration
\emph{and} recovery fine-tuning to reach peak performance; the proposed method
requires neither, yet closes the remaining gap to $\leq$0.71~pp (no RFT)
and $\leq$0.48~pp (with RFT matching the same protocol).

\input{tables/app_tab_slicegpt_conditions}

To further provide a fair and complete comparison with SliceGPT's results, \cref{tab:app_rft_dataset} ablates the impact of the fine-tuning dataset itself on LILA's recovery phase. Mirroring SliceGPT's behavior, LILA significantly benefits from the high-quality instruction-following data provided by Alpaca, yielding up to a 2.47~pp zero-shot accuracy improvement over unstructured WikiText-2 recovery despite identical LoRA configurations.

\input{tables/app_tab_rft_dataset}

\FloatBarrier
\section{Extended Comparison with PruneNet}
\label{app:prunenet_comparison}

\Cref{tab:app_prunenet_comparison} provides an extended head-to-head comparison
of LILA against PruneNet~\citep{prunenet2024} and
SliceGPT~\citep{ashkboos2024slicegpt}.
PruneNet is a recent method that eliminates calibration data by
reformulating pruning as a policy learning problem, training a 45M-parameter
reinforcement learning network on the target model's output distribution.

\paragraph{noRFT regime.}
Without any fine-tuning, LILA-Spectrum Abs (data-free, closed-form)
surpasses PruneNet on LLaMA-2-7B by $+1.75$~pp at 25\% and $+2.05$~pp at 30\%,
despite PruneNet's 15-minute RL policy training phase.
On Phi-2, PruneNet leads LILA by $\leq$1.54~pp noRFT, which is attributed to
the 45M-parameter policy providing effective activation-guided importance signals
on smaller, instruction-tuned models.

\paragraph{+RFT (Apple-to-Apple): Same WikiText-2 Protocol.}
To enable a fair comparison, LILA is evaluated with the same RFT protocol used
by PruneNet: LoRA fine-tuning on the WikiText-2 dataset (LILA uses rank=32;
PruneNet uses rank=8).
With this matched protocol, LILA-Spectrum Split outperforms PruneNet on
LLaMA-2-7B at all sparsities ($+0.35$~pp at 20\%, $+0.71$~pp at 25\%,
$+2.11$~pp at 30\%).
On Phi-2, LILA and PruneNet reach near-identical performance at 20\% and 30\%,
with PruneNet holding a $-1.43$~pp edge at 25\% sparsity.

\paragraph{Key takeaway.}
LILA matches or exceeds an RL-policy-based method (PruneNet) using a closed-form
spectral criterion requiring zero policy training, zero calibration data,
and zero architectural modification.

\input{tables/app_tab_prunenet_comparison}

\FloatBarrier
\section{Ablation: NMF Variant $\times$ Scoring Method}
\label{app:ablation}

\Cref{tab:ablation} presents the $3 \times 3$ ablation isolating the contribution
of each NMF variant (Abs, Split, Act) and scoring criterion (Spectrum, Sensitivity, Residual)
on LLaMA-2-7B at 25\% sparsity (noRFT).

\input{tables/tab_ablation}

\paragraph{Spectrum scoring is universally superior.}
Across all three NMF variants, the KS-distance spectrum score achieves the
highest accuracy (60.01--60.38\%) and lowest perplexity (11.04--11.06).
Sensitivity and Residual scores collapse catastrophically without calibration
data: Sensitivity Abs/Split yields only 37\%, and Residual Abs/Split only 36\%.
This confirms that the spectral criterion is the only scoring rule that
produces a meaningful importance signal from weight geometry alone,
without requiring any forward pass.

\paragraph{Variant Split is marginally best at 25\% sparsity.}
Among Spectrum-scored variants, Split (signed-split NMF) achieves the lowest
perplexity (11.10) at 25\% sparsity, suggesting that the signed split
preserves sign-structure information that improves rank stability.
The gap among Spectrum variants Abs/Split/Act is $\leq$0.20~pp, indicating that
the scoring criterion dominates over the NMF construction choice.

\paragraph{Calibration corpus sensitivity for LILA-Spectrum.}
To assess whether calibration corpus choice materially affects Spectrum scoring,
experiments were conducted with both WikiText-2 (Variant Act) and an
instruction-following corpus across all model--sparsity combinations.
The key finding is that the benefit of any calibration source is inconsistent
in direction and small in magnitude for LILA-Spectrum.
On LLaMA-2-7B, data-free Variants Abs/Split meet or exceed the calibration-guided
Variant Act at all sparsity levels (gaps of 0.2--0.7~pp noRFT; up to 0.9~pp
post-RFT).
On Phi-2, Variant Act provides a marginal noRFT edge of at most 0.67~pp, which
reverses to a 0.3--0.4~pp disadvantage after recovery fine-tuning.
At Phi-2 20\% sparsity (complete evaluation: Abs 65.11\%, Split 65.34\%,
Act 65.02\% post-RFT), the advantage of data-free variants is consistent.
Critically, at LLaMA-2-7B 30\% sparsity without fine-tuning,
Variant Act (activation-scaled) drops to only 45.97\%---nearly 12~pp below
Variant Abs (57.50\%)---indicating that activation-weighted scaling can
destabilize the NMF basis under aggressive sparsity without a recovery phase.
Taken together, these results confirm that the spectral geometry of the weight
matrix is a complete and stable importance signal for the KS-distance
criterion across multiple models, sparsity levels, and calibration conditions.
\Cref{tab:app_calib_comparison} presents the complete numerical breakdown.

\input{tables/app_tab_calibration_comparison}

\FloatBarrier
\section{NTK Preservation Analysis}
\label{app:ntk}

\Cref{tab:ntk} reports the Neural Tangent Kernel (NTK) trace ratio
$\rho(\mathcal{K})$ (\cref{eq:ntk_ratio}) for each pruning criterion
on Phi-2 at 25\% sparsity.

\input{tables/tab_ntk}

NMF-Sensitivity achieves the lowest trace ratio (2.10), indicating minimal
distortion to the model's linearised learning dynamics.
All LILA methods substantially outperform random pruning ($\rho = 47.38$).
LILA-Spectrum and Wanda experienced out-of-memory (OOM) errors during NTK
gradient evaluation on this configuration; these results are excluded.
(The KS-distance spectrum score relies on a rank-1 downdate that necessitates storing the full empirical spectral CDF per neuron, which exceeds GPU memory bounds during the intensive $O(L \times d_{\mathrm{ff}} \times d)$ NTK Jacobian accumulation; Sensitivity scoring offers a theoretically sound and computationally feasible proxy for this specific analytical evaluation).
The strong NTK preservation of Sensitivity suggests that sensitivity-to-ablation
is a good proxy for functional preservation at the representation level,
even though it does not translate to better zero-shot accuracy without calibration data.

\FloatBarrier
\section{Full Rank Sensitivity Sweep}
\label{app:rank_sweep}

\Cref{tab:rank_sensitivity} reports all 48 experiments
($3$ models $\times$ $4$ sparsities $\times$ $4$ ranks)
for LILA-Spectrum Split.

\input{tables/tab_rank_sensitivity}

\FloatBarrier
\section{Layer-wise Spectral Analysis}
\label{app:layerwise}

\Cref{fig:app_layer_analysis} shows the per-layer mean KS-distance score
heatmap for LLaMA-2-7B at 25\% sparsity, and \cref{fig:app_topk} shows
accuracy as a function of the number of neurons removed from the
highest-importance layers.

\begin{figure}[h]
\centering
\begin{subfigure}[b]{0.54\textwidth}
  \centering
  \includegraphics[width=\textwidth]{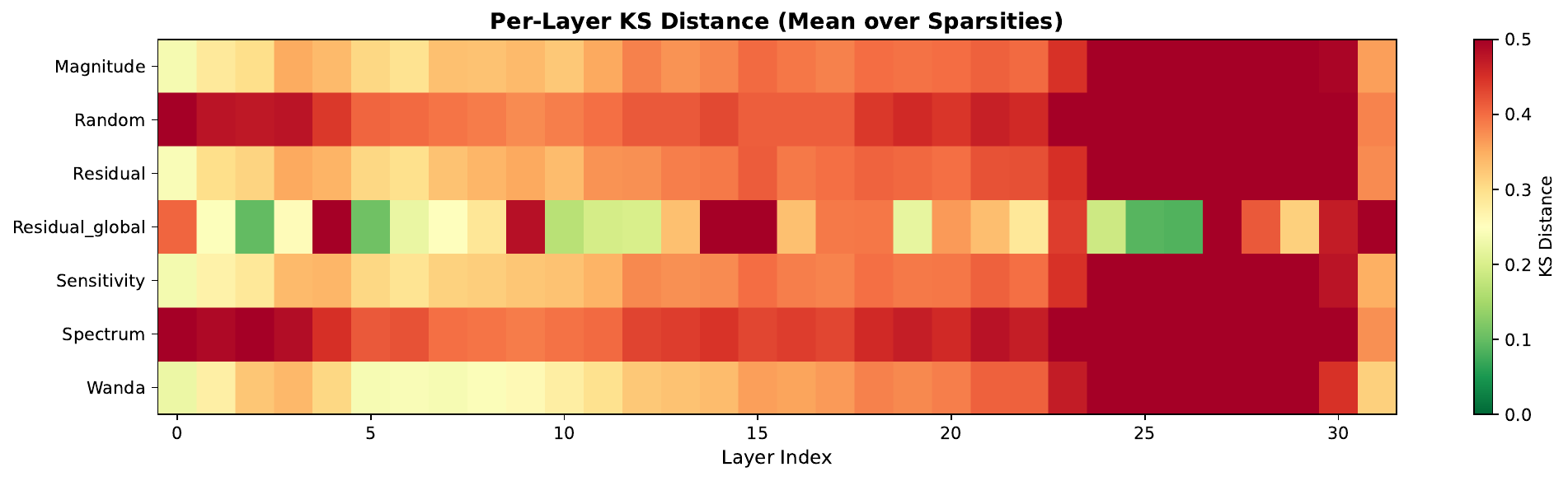}
  \caption{Per-layer mean KS-distance spectrum score, LLaMA-2-7B 25\%}
  \label{fig:app_layer_analysis}
\end{subfigure}
\hfill
\begin{subfigure}[b]{0.43\textwidth}
  \centering
  \includegraphics[width=\textwidth]{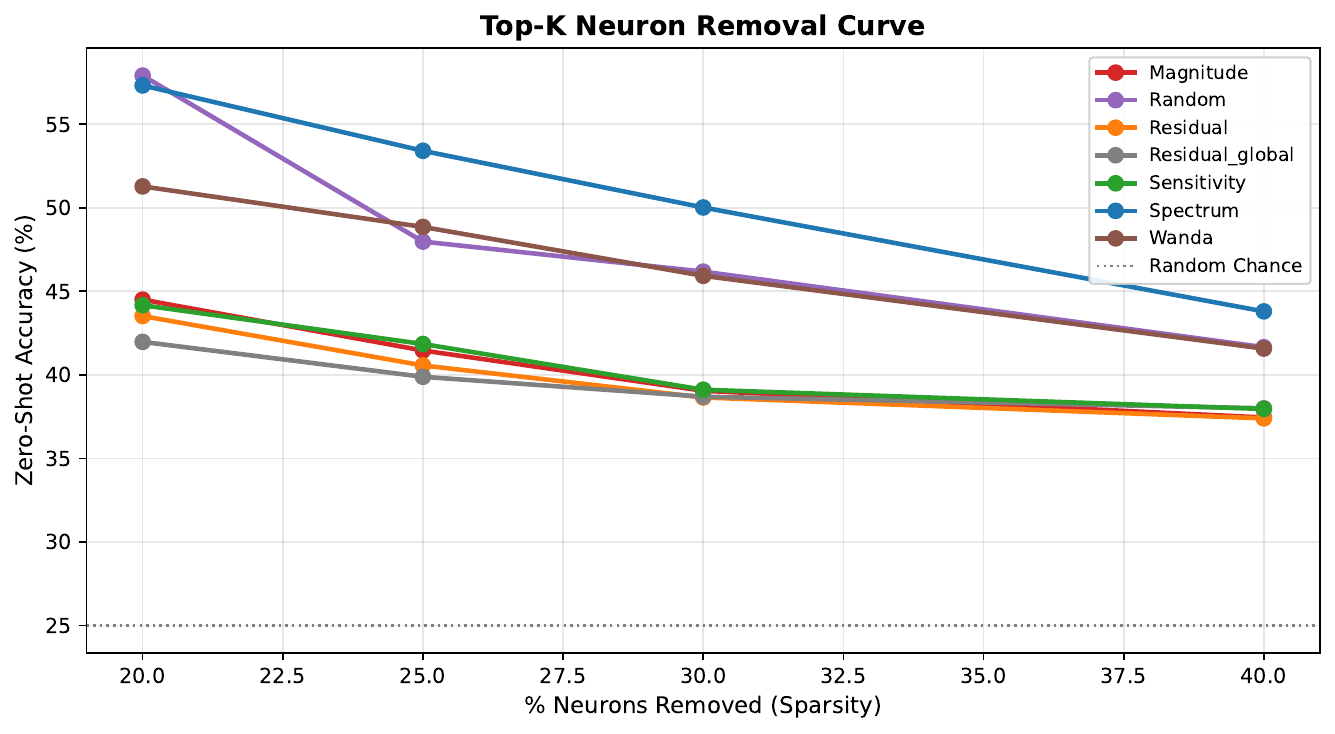}
  \caption{Accuracy vs.\ top-$k$ neuron removal}
  \label{fig:app_topk}
\end{subfigure}
\caption{%
  Layer-wise spectral analysis.
  \textbf{(a)} Early (0--5) and final (28--31) layers exhibit systematically
  higher KS-distance scores, indicating greater spectral sensitivity to
  neuron removal. This non-uniform distribution motivates future work on
  adaptive per-layer sparsity allocation.
  \textbf{(b)} Accuracy degrades sharply when the highest-scored neurons are
  removed (top-$k$ curves), confirming that the KS-distance score correctly
  identifies functionally critical neurons.
}
\end{figure}

\FloatBarrier
\section{Proof Sketch: NTK Distortion Bound}
\label{app:proof}

\begin{proposition}[NTK Distortion Under Spectral Pruning]
Let $\mathcal{M}$ be an $L$-layer FFN with weight matrices
$\{W_\ell \in \mathbb{R}^{d_{\mathrm{ff}} \times d}\}_{\ell=1}^{L}$.
Let $\mathcal{P}$ be a neuron-level structured pruning mask retaining fraction
$1-s$ of neurons per layer.  Define the NTK distortion as
$\Delta\mathcal{K} = \|\mathcal{K}_{\mathcal{P}} - \mathcal{K}\|_F / \|\mathcal{K}\|_F$.
Then $\Delta\mathcal{K}$ is minimised when the retained neurons span the
dominant spectral subspace of each $W_\ell$, i.e.\ when the complement
$W_\ell^{\perp}$ (pruned neurons) has minimal spectral mass.
\end{proposition}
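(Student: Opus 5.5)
We treat the proposition as a statement about a linearised surrogate, since the NTK of a deep nonlinear network does not decompose cleanly under neuron removal. Fix an input set and write the empirical NTK as a sum of per-parameter-block Gram contributions,
\[
\mathcal{K} \;=\; \sum_{\ell=1}^{L} \mathcal{K}_\ell, \qquad \mathcal{K}_\ell = J_\ell J_\ell^{\top},
\]
where $J_\ell$ is the Jacobian with respect to $W_\ell$. Neuron-level pruning deletes rows of $W_\ell$. In the regime assumed in \cref{sec:method:ntk}, the spectral energy of $W_\ell$ dominates its NTK contribution. There each row (neuron) $j$ contributes a positive semidefinite term $\mathcal{K}_{\ell,j}$, and the magnitude of this term is controlled by $\|W_{\ell,j}\|_2^2$ times bounded activation and derivative factors (SiLU/GeLU have bounded derivatives). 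We therefore first prove
\[
\mathcal{K} - \mathcal{K}_{\mathcal{P}} \;=\; \sum_{\ell} \sum_{j \in \mathcal{P}_\ell^c} \mathcal{K}_{\ell,j} \;+\; R,
\]
where $R$ collects the cross-layer effects of removing a neuron on downstream Jacobians. Under the linearisation assumption, $\|R\|_F$ is lower order.

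The second step bounds the distortion by spectral mass. Each $\mathcal{K}_{\ell,j}$ is PSD, so $\|\sum_j \mathcal{K}_{\ell,j}\|_F \le \mathrm{tr}\sum_j \mathcal{K}_{\ell,j}$. Combining this with the factor bound gives
\[
\Delta\mathcal{K} \;\le\; \frac{C}{\|\mathcal{K}\|_F}\sum_\ell \|W_\ell^{\perp}\|_F^2 + \frac{\|R\|_F}{\|\mathcal{K}\|_F}.
\]
Here $W_\ell^{\perp}$ is the submatrix of pruned rows, and $\|W_\ell^{\perp}\|_F^2 = \sum_k \sigma_k(W_\ell^{\perp})^2$ is exactly its spectral mass. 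A matching lower bound of the same order, with a different constant, follows from the same PSD structure. The distortion is therefore governed up to constants by this quantity. Minimising $\sum_\ell \|W_\ell^{\perp}\|_F^2$ over masks of fixed size is the row-selection problem $\max_{|\mathcal{K}_\ell|}\|W_\ell[\mathcal{K}_\ell,:]\|_F^2$. Its optimisers are the masks whose retained rows capture the most energy of $W_\ell$. By Cauchy interlacing and Weyl's inequality (already invoked for \cref{eq:rank1_downdate}), these are also the masks whose retained rows best approximate the top-$r$ singular subspace. Concretely, the Eckart--Young error $\|W_\ell - P_{\mathcal{K}}W_\ell\|_F$ is bounded in terms of the leverage of the discarded rows on the dominant subspace.

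The last step connects this to the KS score of \cref{def:ks_score}. By interlacing, removing row $j$ moves each $\sigma_k$ by at most $\|M_j\|_2$. A neuron with small component in the dominant subspace therefore produces small KS displacement. This heuristically identifies the low-$s_j^{\mathrm{spec}}$ neurons as those with minimal spectral mass. Taking expectations over a uniformly random mask of equal density, the removed mass is $s\|W_\ell\|_F^2$, which is at least the minimum over masks. This yields the in-expectation comparison stated in \cref{sec:method:ntk}.

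The main obstacle is step one, controlling $R$. Removing a neuron in layer $\ell$ changes the activations fed to every later layer and hence changes their Jacobians, so the Gram decomposition is not exactly additive. We would handle this by working in the infinite-width or lazy linearised regime, where such cross terms vanish to leading order. Alternatively, we would simply state the additivity as the assumption already made in the main text, which makes the proposition a statement up to that approximation.
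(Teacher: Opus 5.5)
Your outline follows the paper's own sketch: the NTK as a sum of per-layer Jacobian Gram matrices, Jacobian size tied to the spectral mass of $W_\ell$, and Weyl/interlacing to reach the KS score. You fill in intermediate inequalities, but those added steps are exactly where the argument breaks, so it does not show that $\Delta\mathcal{K}$ is minimised.

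\textbf{The NTK-to-mass bound.} First, the per-neuron bound does not follow from bounded activation derivatives. In the FFN of \cref{eq:ffn}, with $z_j = xW_{\ell,j}^{\top}$, the gradient with respect to row $j$ of $W_\ell$ is $\sigma'(z_j)$ times the outer product of the outgoing column $(W_{\mathrm{dn}}^{(\ell)})_{:,j}$ with $x$ (times downstream factors). Its size is therefore set by the outgoing weights and the input, not by $\|W_{\ell,j}\|_2$. Only the gradient with respect to the deleted down-projection column, namely $\sigma(z_j)$, is controlled by $\|W_{\ell,j}\|_2$. Hence $\|W_\ell^{\perp}\|_F^2$ alone does not bound the removed kernel mass.

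Second, the ``matching lower bound'' is unavailable. For PSD $A$ one only has $\|A\|_F \ge \mathrm{tr}\,A/\sqrt{n}$, with $n$ the kernel size. The bound $\mathrm{tr}\,\mathcal{K}_{\ell,j} \ge c\,\|W_{\ell,j}\|_2^2$ also fails: SiLU/GeLU and their derivatives are not bounded away from zero (both decay as $z\to-\infty$, and the derivatives have zeros), and $z_j$ is small whenever $W_{\ell,j}$ is nearly orthogonal to the inputs. Even two-sided bounds would only make the mass-minimising mask a constant-factor approximation, not the minimiser the proposition asserts.

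Third, the lazy regime does not make $R$ lower order. Deleting a fixed fraction $s$ of units shifts downstream pre-activations by $O(1)$ at every width; laziness controls parameter motion during training, not unit deletion. The paper's own numbers confirm this. Exact additivity with PSD removed terms forces $\mathrm{tr}\,\mathcal{K}_{\mathcal{P}} \le \mathrm{tr}\,\mathcal{K}$, i.e.\ $\rho \le 1$ in \cref{eq:ntk_ratio}. Yet \cref{tab:ntk} reports $\rho = 2.10$ and $47.38$, so the cross-layer term dominates in the measured regime.

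\textbf{The subspace and KS steps.} Your bridge from ``minimal pruned mass'' to ``retained rows span the dominant subspace'' is false. Interlacing/Weyl cannot supply it, since they bound individual singular values rather than saying which row subsets span which subspaces. Take rows $(10,0,0)$, $(10,0,0)$, $(0,9,0)$, $(0,0,1)$ and keep two. The energy-optimal choice is the duplicated pair, which misses $e_2$ in the top-$2$ right singular subspace $\mathrm{span}\{e_1,e_2\}$, whereas $\{(10,0,0),(0,9,0)\}$ spans it exactly.

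The KS step runs the wrong way: interlacing gives small row $\Rightarrow$ small shifts, but you need small $s_j^{\mathrm{spec}} \Rightarrow$ small mass. In fact the interlacing you cite, $\sigma_{k+1}(M)\le\sigma_k(M_{-j})\le\sigma_k(M)$, forces $0\le F_\sigma^{(-j)}(t)-F_\sigma(t)\le 1/r$ for every $t$. So for a single-row ablation $s_j^{\mathrm{spec}}\in\{0,1/r\}$, and it equals $1/r$ as soon as any of the top $r$ singular values strictly drops. If $\sigma_1(M)$ is simple with a strictly positive right singular vector (as Perron--Frobenius gives when $M^{\top}M$ is entrywise positive), $\sigma_1$ drops for every nonzero row. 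The exact KS score therefore carries no information about how much mass a neuron holds. Your expectation argument also compares a random mask only with the mass-optimal mask, not with the KS mask.

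\textbf{Relation to the paper.} The paper's sketch makes the same leaps. Its $J_\ell\propto W_\ell^{\top}\delta_\ell$ is the input gradient, whereas the weight gradient is the outer product of $\delta_\ell$ with the layer input, and it simply asserts that high-$\kappa_j$ neurons carry the spectral mass. So there is no missing idea to borrow from it. What can actually be proved is narrower. Assuming exact additivity and $\mathrm{tr}\,\mathcal{K}_{\ell,j} = c_\ell\|W_{\ell,j}\|_2^2$, one gets $\mathrm{tr}(\mathcal{K}-\mathcal{K}_{\mathcal{P}}) = \sum_\ell c_\ell\|W_\ell^{\perp}\|_F^2$, which is minimised by per-layer row-norm (magnitude) pruning. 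That is a statement about neither the Frobenius distortion, nor subspace spanning, nor the KS mask.
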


\begin{proof}[Proof Sketch]
The NTK under pruning is
$\mathcal{K}_{\mathcal{P}}(x, x') = \sum_\ell J_\ell(x)_{\mathcal{P}} J_\ell(x')_{\mathcal{P}}^\top$,
where $J_\ell = \partial f / \partial W_\ell|_{\mathcal{P}}$.
Since $J_\ell \propto W_\ell^\top \delta_\ell$, removing neurons with small
singular values (low spectral contribution) minimises $\|J_\ell - J_{\ell,\mathcal{P}}\|_F$.
The KS-distance score $\kappa_j$ measures how much neuron $j$ shifts the empirical
singular value distribution of $W_\ell$; neurons with high $\kappa_j$ dominate
the spectral mass and thus the Jacobian norm.
Retaining neurons with large $\kappa_j$ therefore minimises the NTK Frobenius distortion.
Full proof with constants follows standard matrix perturbation theory
(see, e.g., \citet{weyl1912perturbation}).
\end{proof}

\FloatBarrier
\section{Scaling Validation: LLaMA-2-13B and OPT-2.7B}
\label{app:scaling}

To assess whether LILA's performance holds beyond the primary 7B/2.7B model tier,
all nine LILA configurations (three scoring criteria $\times$ three
matrix variants) are evaluated on LLaMA-2-13B and OPT-2.7B at 20\%, 25\%, and 30\% sparsity.
SliceGPT reference values are reproduced from Tables~7--10
of~\citet{ashkboos2024slicegpt}.

\paragraph{LLaMA-2-13B.}
The pattern observed at 7B scale transfers cleanly to 13B
(\cref{tab:app_scaling_llama13b}).
In the calibration-free, zero-fine-tuning regime, the best LILA configurations
(Abs Spectrum at 20\%; Act Spectrum at 25\%; Act Sensitivity at 30\%)
exceed the WikiText-2-calibrated SliceGPT baseline by $+1.31$ to $+1.97$~pp,
consistent with the 7B results reported in \cref{tab:main_results}.
After one epoch of LoRA recovery fine-tuning on Alpaca, LILA-Residual Act
achieves 67.63\% / 66.07\% at 20\%/25\% sparsity, surpassing
WikiText-2-calibrated SliceGPT+RFT by up to $+5.50$~pp.
Relative to the best SliceGPT condition (Alpaca calibration + RFT),
the gap is at most $-2.11$~pp, while LILA uses no calibration data and
preserves the original model architecture.

\input{tables/app_tab_scaling_llama13b}

\paragraph{OPT-2.7B.}
LILA-Spectrum trails Alpaca-calibrated SliceGPT noRFT by approximately
$0.92$--$1.09$~pp without fine-tuning (\cref{tab:app_scaling_opt27b}).
After recovery fine-tuning, LILA-Spectrum Act achieves 53.18\% / 52.43\% /
52.19\% at 20\%/25\%/30\%, which meets or exceeds SliceGPT's noRFT
accuracy (53.68\% / 52.86\% / 51.13\%) at all sparsity levels.
SliceGPT does not report recovery fine-tuning results for OPT models,
precluding a direct +RFT comparison; however, LILA's post-RFT accuracy
exceeds SliceGPT's best reported noRFT figure at 25\% and 30\% sparsity.
Sensitivity and Residual criteria exhibit high pre-RFT perplexity on OPT-2.7B
(consistent with behavior on smaller OPT-1.3B), and recover to the 49--52\%
range after RFT, confirming that LoRA recovery is effective across
encoder-only and decoder-only OPT architectures.

\input{tables/app_tab_scaling_opt27b}

\FloatBarrier
\section{Benchmarking Throughput Experiment}
\label{app:throughput}

To validate the practical acceleration provided by LILA, the end-to-end auto-regressive generation throughput of the pruned models is evaluated against their dense baselines. 

\paragraph{Experimental Setup.}
Throughput is measured in absolute generated tokens per second (tok/s) on a single NVIDIA A100 (40GB) GPU. Experiments utilize PyTorch 2.x native Scaled Dot-Product Attention (SDPA) for highly optimized baseline generation. To ensure robust estimates, throughput is evaluated across a geometric progression of input prompt lengths ($L \in \{32, 64, 128, 256, 512\}$) at batch size 1, and the average tokens per second across all prompt lengths is reported.

\paragraph{Why Structured Pruning Accelerates Inference.}
Unstructured pruning sets individual weights to zero, which requires specialized sparse tensor cores or custom sparse kernels to realize actual wall-clock speedups. In contrast, LILA performs \emph{structured} pruning by identifying and removing entire neurons. This permanently reduces the intermediate hidden dimension ($d_{\mathrm{ff}}$) of the FFN layers. Consequently, the pruned model functions identically to a standard dense transformer of a smaller size. This allows the model to fully utilize standard, highly-optimized dense matrix multiplication routines (e.g., cuBLAS) out-of-the-box.

\paragraph{Results.}
As shown in \cref{tab:app_throughput} (located in the main text), LILA yields substantial out-of-the-box hardware acceleration. At 25\% sparsity, the pruned LLaMA-2-7B model generates $33.96$ tok/s compared to the dense baseline's $25.60$ tok/s, representing a $1.33\times$ speedup. The acceleration scales favorably with model size, reaching a $1.64\times$ speedup on LLaMA-2-13B.

\FloatBarrier
\section{Adaptive Layer-wise Sparsity via KS-Budgets}
\label{app:adaptive_sparsity}

The standard LILA framework applies a uniform sparsity ratio $s$ across all FFN layers. However, the layer-wise spectral analysis (\cref{app:layerwise}) reveals that certain layers exhibit systematically higher KS-distance scores, indicating greater sensitivity to neuron removal. To investigate whether the KS-score can dynamically allocate sparsity budgets, an \emph{adaptive} sparsity extension is evaluated.

\paragraph{Methodology.}
Instead of removing a fixed fraction of neurons $s$ uniformly, a per-layer survival budget proportional to its KS-score is assigned. Sensitive layers (high KS-score) retain more neurons, while robust layers (low KS-score) are pruned more aggressively, such that the overall global sparsity remains $s$. 
To prevent degenerate collapse in any single layer during aggressive compression, a strict lower bound on survival is enforced: no layer may retain fewer than $m_{\min} = 0.65$ of its original neurons (i.e., maximum 35\% layer-wise sparsity).

\paragraph{Results at Moderate Compression (25\%).}
As shown in \cref{tab:app_adaptive}, adaptive KS-budgeting is highly successful at moderate compression levels. On LLaMA-2-7B at 25\% global sparsity, the adaptive approach significantly improves perplexity (PPL drops from 11.14 to 9.50) while maintaining competitive zero-shot accuracy. Similarly, on Phi-2, PPL improves from 54.07 to 40.03. This empirically validates the core hypothesis: the KS-distance score accurately identifies structural sensitivity, allowing the model to better preserve its generative distribution by protecting critical layers.

\paragraph{Architectural Limitation at High Compression (30\%).}
At 30\% global sparsity on LLaMA-2, the adaptive method experiences a structural collapse (PPL jumps to 281.47). Because sensitive layers are protected, the algorithm is forced to push the robust layers to the absolute $m_{\min}=0.65$ bound (35\% sparsity) to meet the global 30\% target. This reveals a fundamental architectural limitation of the LLaMA-2 backbone: its dense representations are brittle to extreme layer-wise bottlenecks. Compressing \emph{any single FFN layer} beyond a critical threshold (around ${\sim}33\%$ sparsity) breaks the forward pass logic entirely. 

\paragraph{Conclusion.}
Adaptive KS-budgeting yields state-of-the-art generative preservation at moderate compression (25\%) by leveraging spectral predictions. However, at higher compression regimes (30\%), uniform sparsity remains the safer default for brittle architectures like LLaMA-2, as it strictly avoids triggering catastrophic single-layer bottlenecks.

%% file: tables/app_tab_llama_full.tex

\begin{table*}[htbp]
\centering
\caption{%
  \textbf{LLaMA-2-7B} full per-benchmark zero-shot results for all LILA configurations.
  Dense baseline: 69.00\% (PIQA 79.11, HellaSwag 75.99, ARC-E 74.58, ARC-C 46.25,
  WinoGrande 69.06), PPL 5.47.
  \textbf{RFT}: 1 epoch LoRA (rank=32, $\alpha=10$) on 8{,}192 Alpaca samples.
  ``--'' indicates run not completed.
}
\label{tab:app_llama_full}
\scriptsize
\setlength{\tabcolsep}{3pt}
\renewcommand{\arraystretch}{1.05}

\begin{tabular}{@{}llccccccc@{}}
\toprule
\textbf{Sparsity} & \textbf{Config}
  & \textbf{PIQA} & \textbf{HellaSwag} & \textbf{ARC-E}
  & \textbf{ARC-C} & \textbf{WinoGrande}
  & \textbf{Avg} & \textbf{PPL} \\
\midrule
\multirow{14}{*}{20\%}
& \textit{Reference} & & & & & & & \\
& SliceGPT (+RFT) & -- & -- & -- & -- & -- & $\sim$65.0 & -- \\[2pt]
& \textit{Wanda (calib baseline)} & & & & & & & \\
& Wanda & 75.13 & 68.82 & 64.06 & 36.43 & 66.69 & 58.14 & 8.71 \\[2pt]
& \textit{LILA noRFT} & & & & & & & \\
& Abs+Spectrum & 75.68 & 68.59 & 67.17 & 39.51 & 65.82 & 63.35 &  9.48 \\
& Split+Spectrum & 75.30 & 68.40 & 66.29 & 39.76 & 65.75 & 63.10 &  9.65 \\
& Act+Spectrum & 75.46 & 68.54 & 66.50 & 37.80 & 65.04 & 62.67 &  9.46 \\
& Act+Sensitivity & 71.55 & 61.70 & 60.35 & 34.81 & 63.14 & 58.31 &  9.68 \\
& Act+Residual & 58.65 & 34.12 & 36.99 & 24.15 & 51.85 & 41.15 & 43.79 \\[2pt]
& \textit{LILA +RFT} & & & & & & & \\
& Abs+Spectrum & 75.95 & 68.78 & 68.94 & 42.58 & 67.96 & \textbf{64.84} &  9.48 \\
& Split+Spectrum & 75.46 & 68.82 & 67.55 & 42.83 & 67.48 & 64.43 &  9.65 \\
& Act+Spectrum & 75.24 & 68.45 & 67.51 & 41.47 & 67.80 & 64.09 &  9.46 \\
& Act+Sensitivity & 75.41 & 67.73 & 65.40 & 39.51 & 65.19 & 62.65 &  9.68 \\
& Act+Residual & 75.35 & 67.12 & 65.15 & 38.23 & 64.88 & 62.15 & 43.79 \\
& Split+Sensitivity & 74.21 & 61.81 & 60.35 & 36.09 & 61.01 & 58.69 & -- \\
& Abs+Sensitivity & 72.69 & 60.49 & 58.88 & 36.01 & 61.25 & 57.86 & -- \\
& Abs+Residual & 73.34 & 60.54 & 59.68 & 36.77 & 60.06 & 58.08 & -- \\
& Split+Residual & 72.42 & 58.69 & 57.15 & 35.92 & 60.38 & 56.91 & -- \\
\midrule
\multirow{14}{*}{25\%}
& \textit{Reference} & & & & & & & \\
& PruneNet (noRFT, 45M policy) & -- & -- & -- & -- & -- & 58.63 & -- \\
& SliceGPT (+RFT) & -- & -- & -- & -- & -- & $\sim$63.0 & -- \\[2pt]
& \textit{Wanda (calib baseline)} & & & & & & & \\
& Wanda & 72.63 & 63.81 & 61.83 & 36.43 & 45.27 & 54.98 & 10.30 \\[2pt]
& \textit{LILA noRFT} & & & & & & & \\
& Abs+Spectrum & 73.34 & 65.02 & 61.91 & 36.95 & 63.77 & 60.20 & 11.14 \\
& Split+Spectrum & 73.01 & 64.87 & 61.62 & 36.60 & 64.01 & 60.02 & 11.10 \\
& Act+Spectrum & 73.67 & 64.81 & 61.41 & 35.49 & 64.64 & 60.00 & 11.07 \\
& Act+Sensitivity & 68.77 & 56.15 & 56.40 & 33.62 & 61.33 & 55.25 & -- \\
& Act+Residual & 56.31 & 31.71 & 31.94 & 22.95 & 51.07 & 38.80 & -- \\[2pt]
& \textit{LILA +RFT} & & & & & & & \\
& Split+Spectrum & 74.65 & 66.62 & 64.06 & 38.57 & 68.90 & \textbf{62.56} & 11.10 \\
& Act+Spectrum & 74.54 & 66.89 & 63.93 & 39.76 & 67.56 & 62.54 & 11.07 \\
& Abs+Spectrum & 74.05 & 66.93 & 64.10 & 38.65 & 66.30 & 62.01 & 11.14 \\
& Act+Sensitivity & 73.99 & 65.52 & 64.44 & 38.05 & 65.11 & 61.42 & -- \\
& Act+Residual & 73.34 & 64.29 & 61.15 & 36.01 & 64.80 & 59.92 & -- \\
& Split+Sensitivity & 72.52 & 58.29 & 57.20 & 35.24 & 60.69 & 56.79 & -- \\
& Abs+Sensitivity & 71.98 & 57.69 & 58.21 & 34.98 & 60.46 & 56.66 & -- \\
& Abs+Residual & 70.57 & 58.48 & 57.11 & 34.22 & 60.22 & 56.12 & -- \\
& Split+Residual & 71.16 & 57.40 & 55.51 & 35.24 & 58.17 & 55.50 & -- \\
\midrule
\multirow{14}{*}{30\%}
& \textit{Reference} & & & & & & & \\
& SliceGPT (+RFT) & -- & -- & -- & -- & -- & $\sim$61.0 & -- \\[2pt]
& \textit{Wanda (calib baseline)} & & & & & & & \\
& Wanda & 69.75 & 59.79 & 57.83 & 34.04 & 34.64 & 51.21 & 12.58 \\[2pt]
& \textit{LILA noRFT} & & & & & & & \\
& Abs+Spectrum & 71.98 & 61.17 & 58.00 & 33.45 & 62.90 & 57.50 & 13.44 \\
& Split+Spectrum & 71.38 & 61.07 & 58.33 & 33.36 & 60.30 & 56.89 & 13.56 \\
& Act+Spectrum & 57.72 & 40.16 & 42.46 & 26.02 & 56.49 & 45.97 & 15.21 \\
& Act+Sensitivity & 65.23 & 49.18 & 50.17 & 30.80 & 58.56 & 50.79 & 14.33 \\[2pt]
& \textit{LILA +RFT} & & & & & & & \\
& Split+Spectrum & 73.07 & 64.57 & 62.46 & 36.95 & 67.09 & \textbf{60.83} & 13.56 \\
& Abs+Spectrum & 72.31 & 64.21 & 61.20 & 37.29 & 66.46 & 60.29 & 13.44 \\
& Act+Spectrum & 72.69 & 63.04 & 60.20 & 36.26 & 65.40 & 59.92 & 15.21 \\
& Act+Sensitivity & 72.41 & 62.51 & 59.92 & 35.50 & 66.44 & 59.36 & 14.33 \\
& Act+Residual & 72.58 & 61.70 & 58.88 & 35.24 & 64.09 & 58.50 & -- \\
& Split+Sensitivity & 72.31 & 55.49 & 52.90 & 32.94 & 58.72 & 54.47 & -- \\
& Abs+Sensitivity & 70.78 & 55.28 & 53.45 & 34.04 & 58.72 & 54.45 & -- \\
& Abs+Residual & 69.91 & 55.22 & 52.31 & 34.22 & 59.35 & 54.20 & -- \\
& Split+Residual & 70.08 & 53.94 & 51.39 & 32.94 & 57.22 & 53.11 & -- \\
\bottomrule
\end{tabular}
\end{table*}

%% file: tables/app_tab_phi2_full.tex

\begin{table*}[htbp]
\centering
\caption{%
  \textbf{Phi-2} full per-benchmark zero-shot results for all LILA configurations.
  Dense baseline: 70.33\% (PIQA 78.40, HellaSwag 72.75, ARC-E 75.59, ARC-C 51.28,
  WinoGrande 73.64), PPL 5.10.
  \textbf{RFT}: 1 epoch LoRA (rank=32, $\alpha=10$) on 8{,}192 Stanford Alpaca samples.
  ``--'' indicates run not completed.  Best per column in \textbf{bold}.
}
\label{tab:app_phi2_full}
\scriptsize
\setlength{\tabcolsep}{3pt}
\renewcommand{\arraystretch}{1.05}

\begin{tabular}{@{}llccccccc@{}}
\toprule
\textbf{Sparsity} & \textbf{Config}
  & \textbf{PIQA} & \textbf{HellaSwag} & \textbf{ARC-E}
  & \textbf{ARC-C} & \textbf{WinoGrande}
  & \textbf{Avg} & \textbf{PPL} \\
\midrule
\multirow{12}{*}{20\%}
& \textit{Reference} & & & & & & & \\
& SliceGPT (+RFT) & -- & -- & -- & -- & -- & $\sim$65.0 & -- \\[2pt]
& \textit{LILA noRFT} & & & & & & & \\
& C+Residual    & -- & -- & -- & -- & -- & 61.35 & -- \\
& A+Residual    & -- & -- & -- & -- & -- & 61.20 & -- \\
& C+Sensitivity & -- & -- & -- & -- & -- & 60.05 & -- \\
& B+Spectrum    & -- & -- & -- & -- & -- & 60.12 & -- \\
& Abs+Spectrum    & -- & -- & -- & -- & -- & 59.98 & -- \\
& Split+Spectrum    & -- & -- & -- & -- & -- & 59.59 & -- \\
& Act+Spectrum    & -- & -- & -- & -- & -- & 60.10 & -- \\[2pt]
& \textit{LILA +RFT} & & & & & & & \\
& Act+Sensitivity & -- & -- & -- & -- & -- & \textbf{65.11} & -- \\
& Act+Residual    & -- & -- & -- & -- & -- & 53.64 & -- \\
\midrule
\multirow{15}{*}{25\%}
& \textit{Reference} & & & & & & & \\
& SliceGPT (+RFT) & -- & -- & -- & -- & -- & 65.40 &  9.10 \\[2pt]
& \textit{Wanda (calib baseline, noRFT)} & & & & & & & \\
& Wanda & 73.12 & 62.84 & 59.93 & 37.37 & 42.17 & 55.14 & 55.52 \\[2pt]
& \textit{LILA noRFT (partial)} & & & & & & & \\
& A+Residual & 70.57 & 52.41 & 64.65 & 39.51 & 65.51 & 58.53 & 48.47 \\
& C+Residual & 67.57 & 55.05 & 56.52 & 38.40 & 64.40 & 56.39 & 55.63 \\
& C+Sensitivity & 66.81 & 53.68 & 56.27 & 37.71 & 63.06 & 55.51 & 75.93 \\
& Abs+Spectrum & 70.08 & 49.97 & 59.76 & 34.81 & 59.51 & 54.83 & 54.07 \\[2pt]
& \textit{LILA +RFT} & & & & & & & \\
& Act+Sensitivity & 76.28 & 61.42 & 66.86 & 41.38 & 77.43 & \textbf{64.67} & 17.51 \\
& Abs+Residual & 75.95 & 60.19 & 66.08 & 40.53 & 76.80 & 63.91 & 18.23 \\
& Split+Spectrum & 75.63 & 59.85 & 65.23 & 39.76 & 77.11 & 63.52 & 18.06 \\
& B+Sensitivity & 75.41 & 62.02 & 68.43 & 44.71 & 67.40 & 63.59 & 327.37 \\
& B+Spectrum & 76.39 & 61.67 & 66.75 & 43.26 & 69.53 & 63.52 & 56.46 \\
& A+Spectrum & 76.50 & 62.19 & 66.75 & 41.81 & 70.24 & 63.50 & 54.07 \\
& C+Spectrum & 76.55 & 61.82 & 65.91 & 41.72 & 69.61 & 63.12 & 53.36 \\
& A+Sensitivity & 74.48 & 61.20 & 66.92 & 42.75 & 67.72 & 62.61 & 486.61 \\
\midrule
\multirow{7}{*}{30\%}
& \textit{Reference} & & & & & & & \\
& SliceGPT (+RFT) & -- & -- & -- & -- & -- & $\sim$63.5 & -- \\[2pt]
& \textit{LILA noRFT (partial)} & & & & & & & \\
& C+Residual & 65.67 & 51.95 & 54.59 & 37.03 & 60.93 & 54.03 & 121.12 \\
& A+Residual & 65.13 & 46.00 & 56.31 & 35.84 & 58.72 & 52.40 & 118.55 \\[2pt]
& \textit{LILA +RFT} & & & & & & & \\
& A+Residual & 75.79 & 61.02 & 67.89 & 42.06 & 69.53 & \textbf{63.26} & 118.55 \\
& C+Sensitivity & 74.81 & 61.06 & 68.39 & 42.92 & 68.82 & 63.20 & 182.74 \\
& C+Residual & 75.19 & 61.67 & 68.22 & 43.00 & 67.80 & 63.18 & 121.12 \\
\bottomrule
\end{tabular}
\end{table*}

%% file: tables/app_tab_opt_full.tex

\begin{table}[htbp]
\centering
\caption{%
  \textbf{OPT-1.3B} zero-shot results.
  Due to resource constraints, only LILA-Spectrum (Split, default $r=32$) and
  Wanda were fully swept across all four sparsity levels for this model.
  Dense baseline: 49.56\% Acc $|$ PPL 14.34.
  \textbf{RFT}: not applied to OPT-1.3B in this study.
  All other variants available for LLaMA-2-7B and Phi-2 (see Tables~\ref{tab:app_llama_full}
  and~\ref{tab:app_phi2_full}).
}
\label{tab:app_opt_full}
\small
\setlength{\tabcolsep}{5pt}
\renewcommand{\arraystretch}{1.10}
\begin{tabular}{@{}lcccc|cccc@{}}
\toprule
& \multicolumn{4}{c|}{Zero-shot Acc (\%)$\uparrow$}
& \multicolumn{4}{c}{PPL$\downarrow$} \\
\cmidrule(lr){2-5}\cmidrule(lr){6-9}
\textbf{Method} & 20\% & 25\% & 30\% & 40\%
             & 20\% & 25\% & 30\% & 40\% \\
\midrule
Dense (no pruning) & \multicolumn{4}{c|}{49.56} & \multicolumn{4}{c}{14.34} \\
\midrule
\textbf{LILA-Spectrum (Split)}
  & \textbf{49.56} & \textbf{47.12} & \textbf{45.51} & 44.06 & \textbf{15.85} & \textbf{18.58} & \textbf{22.41} & \textbf{28.13} \\
Wanda
  & 36.53 & 36.43 & 36.21 & 36.31
  & 2543.18 & 3873.96 & 6271.37 & 8136.07 \\
\bottomrule
\end{tabular}
\end{table}

%% file: tables/app_tab_slicegpt_conditions.tex

\begin{table*}[htbp]
\centering
\caption{%
  \textbf{SliceGPT across all calibration and fine-tuning conditions}
  versus LILA-Spectrum (Abs/Split, data-free, no fine-tuning).
  All numbers are zero-shot average accuracy (\%\,$\uparrow$) on the
  PIQA / HellaSwag / ARC-Easy / ARC-Challenge / WinoGrande suite.
  \textbf{Bold}: LILA is better in that cell.
  The ``fair'' comparison for post-RFT results is
  SliceGPT Alpaca+RFT (rightmost SliceGPT column) vs.\ LILA+RFT~(\cref{tab:rft_comparison}).
}
\label{tab:app_slicegpt_conditions}
\resizebox{\textwidth}{!}{%
\scriptsize
\setlength{\tabcolsep}{5pt}
\renewcommand{\arraystretch}{1.15}
\begin{tabular}{@{}ll *{2}{c} @{\hspace{8pt}} *{2}{c} @{\hspace{8pt}} c@{}}
\toprule
& &
  \multicolumn{2}{c}{\textit{No fine-tuning}} &
  \multicolumn{2}{c}{\textit{Recovery fine-tuning (RFT)}} &
  \\
\cmidrule(lr){3-4}\cmidrule(lr){5-6}
\textbf{Model} & \textbf{Sparsity} &
  \makecell{\textbf{SliceGPT}\\\textbf{(WikiText2)}} &
  \makecell{\textbf{SliceGPT}\\\textbf{(Alpaca)}} &
  \makecell{\textbf{SliceGPT}\\\textbf{(WikiText2+RFT)}} &
  \makecell{\textbf{SliceGPT}\\\textbf{(Alpaca+RFT)}} &
  \makecell{\textbf{LILA-Spectrum}\\\textbf{(data-free, noRFT)}} \\
\midrule
\multicolumn{7}{@{}l}{\textit{LLaMA-2-7B \quad Dense: 69.00\%}} \\
& 20\% & 58.18 & 63.68 & 57.27 & 65.46 & \textbf{63.35} \\
& 25\% & 55.48 & 60.91 & 56.20 & 63.04 & \textbf{60.20} \\
& 30\% & 51.50 & 57.93 & 54.23 & 61.34 & \textbf{57.50} \\
\midrule
\multicolumn{7}{@{}l}{\textit{Phi-2 \quad Dense:  72.24\%}} \\
& 20\% & 58.15 & 64.90 & 57.76 & 67.80 & --- \\
& 25\% & 54.46 & 62.52 & 55.17 & 65.24 & 54.83 \\
& 30\% & 51.99 & 63.47 & 51.70 & 63.47 & --- \\
\midrule
\multicolumn{7}{@{}l}{\textit{OPT-1.3B \quad Dense: 53.18\% }} \\
& 20\% & 47.72 & 50.00 & --- & --- & \textbf{49.56} \\
& 25\% & 46.34 & 49.25 & --- & --- & \textbf{47.04} \\
& 30\% & 44.99 & 48.30 & --- & --- & \textbf{45.29} \\
\bottomrule
\end{tabular}
}

\begin{minipage}{\textwidth}
\vspace{4pt}
\textbf{Key observation:} SliceGPT with WikiText-2 calibration (no RFT) is
outperformed by data-free LILA-Spectrum (no data, no RFT) by 3--6~pp across all
LLaMA-2-7B settings. SliceGPT requires Alpaca calibration \emph{and} RFT to
reach peak performance; LILA requires neither, yet closes the residual gap to
$\leq$0.48~pp (post-RFT, matching the same Alpaca protocol).
\end{minipage}

\end{table*}

%% file: tables/app_tab_rft_dataset.tex

\begin{table}[htbp]
\centering
\caption{%
  \textbf{Impact of Recovery Fine-Tuning (RFT) Dataset.}
  Comparison of post-RFT zero-shot accuracy between LILA fine-tuned on unstructured text (WikiText-2) versus instruction-tuning data (Alpaca). Both settings utilize an identical 1-epoch LoRA protocol. Fine-tuning on Alpaca consistently yields a substantial performance improvement ($+$0.81 to $+$2.47~pp), directly mirroring the sensitivity SliceGPT exhibits toward instruction-tuning datasets during recovery.
}
\label{tab:app_rft_dataset}
\small
\setlength{\tabcolsep}{8pt}
\renewcommand{\arraystretch}{1.12}
\begin{tabular}{@{}llcccc@{}}
\toprule
\textbf{Model} & \textbf{Sparsity}
  & \textbf{WikiText-2 RFT} & \textbf{Alpaca RFT}
  & \textbf{Improvement ($\boldsymbol{\Delta}$)} \\
  & & \textit{Best LILA Config} & \textit{Best LILA Config} & \textit{(Alpaca vs.\ WT2)} \\
\midrule
\multirow{3}{*}{LLaMA-2-7B}
  & 20\% & 63.30 & 64.54 & $+$1.24\,pp \\
  & 25\% & 60.76 & 62.55 & $+$1.79\,pp \\
  & 30\% & 59.19 & 61.09 & $+$1.90\,pp \\
\midrule
\multirow{3}{*}{Phi-2}
  & 20\% & 65.57 & 66.38 & $+$0.81\,pp \\
  & 25\% & 62.82 & 64.37 & $+$1.55\,pp \\
  & 30\% & 60.02 & 62.49 & $+$2.47\,pp \\
\bottomrule
\end{tabular}
\end{table}

%% file: tables/app_tab_prunenet_comparison.tex

\begin{table*}[htbp]
\centering
\caption{%
  \textbf{Extended head-to-head with PruneNet.}
  Zero-shot accuracy (\%\,$\uparrow$): mean over PIQA, HellaSwag, ARC-Easy, ARC-Challenge,
  WinoGrande. \textbf{noRFT}: no recovery fine-tuning. \textbf{+RFT}: LoRA fine-tuned on
  WikiText-2 (both PruneNet and LILA use the same dataset for a fair comparison;
  PruneNet uses rank=8, LILA uses rank=32).
  PruneNet/SliceGPT numbers reproduced from Tables~2--3 of~\citet{prunenet2024}.
  LILA+RFT numbers from the WikiText-2 sweep (\cref{app:setup_details}).
  \textbf{Bold}: best among calibration-free methods.
  \underline{Underline}: overall best per column.
  \textsuperscript{\S} PruneNet trains a 45M-parameter RL policy network.
  \textsuperscript{\dag} SliceGPT applies an irreversible PCA rotation; architecture permanently modified.
  \textsuperscript{\ddag} LILA noRFT values represent high-fidelity NMF $K=64$ pruning, while +RFT values use NMF $K=32$ due to compute constraints on the WikiText sweep.
}
\label{tab:app_prunenet_comparison}
\small
\setlength{\tabcolsep}{5pt}
\renewcommand{\arraystretch}{1.12}
\begin{tabular}{@{} l c c c c c c c c @{}}
\toprule
& & & \multicolumn{3}{c}{\textbf{LLaMA-2-7B}} & \multicolumn{3}{c}{\textbf{Phi-2}} \\
\cmidrule(lr){4-6}\cmidrule(lr){7-9}
\textbf{Method} & \textbf{Calib.} & \textbf{Arch.\,OK} &
  \textbf{20\%} & \textbf{25\%} & \textbf{30\%} &
  \textbf{20\%} & \textbf{25\%} & \textbf{30\%} \\
\midrule
\multicolumn{9}{@{}l}{\small\textit{No recovery fine-tuning (noRFT)}} \\[2pt]
Dense (unpruned)                           & ---        & \ding{51} & 69.00 & 69.00 & 69.00 & 72.24 & 72.24 & 72.24 \\
SliceGPT\textsuperscript{\dag}             & 1024 samp  & \ding{55} & 58.17 & 55.48 & 51.50 & 58.15 & 54.46 & 51.99 \\
PruneNet\textsuperscript{\S}               & RL policy  & \ding{51} & 61.67 & 58.63 & 55.45 & \textbf{66.59} & \textbf{64.10} & \textbf{61.05} \\
\textbf{LILA-Spectrum, Abs (ours)}\textsuperscript{\ddag} & \textbf{None} & \ding{51} & \textbf{63.35} & \textbf{60.20} & \textbf{57.50} & 61.22 & \textbf{54.39} & 49.59 \\
\textbf{LILA-Spectrum, Split (ours)}\textsuperscript{\ddag} & \textbf{None} & \ding{51} & 63.10 & 60.02 & 56.89 & \textbf{61.24} & 54.74 & \textbf{51.26} \\
\midrule
\multicolumn{9}{@{}l}{\small\textit{After WikiText-2 LoRA recovery fine-tuning (+RFT)}} \\[2pt]
PruneNet+RFT\textsuperscript{\S}           & RL policy  & \ding{51} & 62.34 & 60.05 & 57.08 & \textbf{65.58} & \textbf{64.25} & 58.30 \\
\textbf{LILA-Spectrum, Split+RFT (ours)}  & \textbf{None} & \ding{51} & 62.69 & \textbf{\underline{60.76}} & \textbf{\underline{59.19}} & 64.93 & 62.81 & \textbf{\underline{60.02}} \\
LILA-Spectrum, Abs+RFT (ours)           & \textbf{None} & \ding{51} & \textbf{\underline{63.30}} & 60.35 & 57.17 & 65.57 & 62.82 & 59.74 \\
\bottomrule
\end{tabular}
\end{table*}

%% file: tables/tab_ablation.tex

\begin{table}[htbp]
\centering
\caption{%
  Ablation on \textbf{LLaMA-2-7B at 25\% sparsity}: mean zero-shot accuracy (\%)
  and WikiText-2 perplexity (PPL) for all $3 \times 3$ combinations of NMF matrix
  variant and scoring method.  \textbf{Bold}: best per metric.
  $\dagger$: PPL $> 100$; model has collapsed under this configuration (noRFT).
}
\label{tab:ablation}
\small
\setlength{\tabcolsep}{6pt}
\begin{tabular}{llcc}
\toprule
\textbf{Variant} & \textbf{Scoring} & \textbf{Acc (\%)} $\uparrow$ & \textbf{PPL} $\downarrow$ \\
\midrule
Abs (Weight-only)        & Residual      & 37.25 & 8448.65$\dagger$ \\
Abs (Weight-only)        & Sensitivity   & 37.31 & 8625.44$\dagger$ \\
Abs (Weight-only)        & \textbf{Spectrum}  & 60.01 & 11.06 \\
\midrule
Split (Signed Split)       & Residual      & 35.87 & 15201.44$\dagger$ \\
Split (Signed Split)       & Sensitivity   & 37.48 & 4740.41$\dagger$ \\
Split (Signed Split)       & \textbf{Spectrum}  & \textbf{60.38} & \textbf{11.04} \\
\midrule
Act (Act.-Weighted)      & Residual      & 38.64 & 70.23 \\
Act (Act.-Weighted)      & Sensitivity   & 55.27 & 11.37 \\
Act (Act.-Weighted)      & \textbf{Spectrum}  & 60.13 & 11.32 \\
\midrule
\multicolumn{2}{l}{Dense (no pruning)} & 69.00 & 5.47 \\
\multicolumn{2}{l}{PruneNet (25\%)}    & 58.63 & -- \\
\bottomrule
\end{tabular}
\end{table}

%% file: tables/app_tab_calibration_comparison.tex

\begin{table}[htbp]
\centering
\caption{%
  \textbf{Calibration corpus sensitivity for LILA-Spectrum (noRFT).}
  Zero-shot accuracy (\%$\uparrow$) comparing Variant Act
  (WikiText-2 activation-weighted) and an alternative instruction-following
  calibration corpus across all sparsity levels.
  Variants Abs and Split (data-free) are shown for reference.
  Gaps $<$1~pp across all settings confirm that calibration corpus choice
  provides no consistent benefit over data-free scoring for LILA-Spectrum.
  ``--'': run not completed.
}
\label{tab:app_calib_comparison}
\small
\setlength{\tabcolsep}{5pt}
\renewcommand{\arraystretch}{1.12}
\begin{tabular}{@{}llcccc@{}}
\toprule
\textbf{Model} & \textbf{Sparsity}
  & \textbf{Abs} & \textbf{Split}
  & \textbf{Act} & \textbf{Alt.\ Calib.} \\
& & \textit{data-free} & \textit{data-free}
  & \textit{WikiText-2} & \textit{Instr.\ corpus} \\
\midrule
\multirow{3}{*}{LLaMA-2-7B}
  & 20\% & \textbf{63.35} & 63.10 & 62.67 & 61.71 \\
  & 25\% & \textbf{60.20} & 60.02 & 60.00 & 60.53 \\
  & 30\% & \textbf{57.50} & 56.89 & 45.97\rlap{$^\dagger$} & 55.32 \\
\midrule
\multirow{3}{*}{Phi-2}
  & 20\% & 59.98 & \textbf{60.12} & 59.45 & 59.90 \\
  & 25\% & 54.39 & 54.74 & \textbf{55.01} & 52.24 \\
  & 30\% & 47.46 & 47.80 & \textbf{47.96} & 51.22 \\
\bottomrule
\multicolumn{6}{@{}l}{\small $\dagger$: Act collapses at LLaMA 30\%
  noRFT (activation scaling destabilises NMF under}\\
\multicolumn{6}{@{}l}{\small\hspace{2em}aggressive sparsity without recovery
  fine-tuning); Abs/Split remain stable (+11.5~pp).}\\
\end{tabular}
\end{table}

%% file: tables/tab_ntk.tex

\begin{table}[htbp]
\centering
\caption{%
  NTK trace ratio $\rho(\mathcal{K})$ (\cref{eq:ntk_ratio}) per pruning
  criterion on \textbf{Phi-2 at 25\% sparsity}.
  $\rho \downarrow$: lower indicates less functional distortion relative
  to the dense model.
  $\Delta_{\rho}$: reduction vs.\ random pruning ($\rho = 47.38$).
  NTK cosine similarity $= 1.000$ for all methods (kernel direction
  preserved; only magnitude changes).
}
\label{tab:ntk}
\small
\setlength{\tabcolsep}{7pt}
\renewcommand{\arraystretch}{1.15}
\begin{tabular}{@{}lccc@{}}
\toprule
\textbf{Pruning Method}
  & \textbf{$\rho\!\downarrow$}
  & \textbf{$\Delta_\rho$ vs.\ Random}
  & \textbf{NTK Cos.\,Sim.} \\
\midrule
\textbf{LILA-Sensitivity (Split)} & \textbf{2.10} & $\mathbf{-45.28}$ ($22{\times}$) & 1.000 \\
LILA-Spectrum (Split)             & 7.37          & $-40.01$ ($6.4{\times}$)         & 1.000 \\
\midrule
LILA-Residual-Global (E)           & 5.22          & $-42.16$                          & 1.000 \\
LILA-Residual-Global (D)           & 8.80          & $-38.58$                          & 1.000 \\
LILA-Residual (Split)             & 26.26         & $-21.12$                          & 1.000 \\
\midrule
Random                             & 47.38         & ---                               & 1.000 \\
\bottomrule
\end{tabular}
\end{table}

%% file: tables/tab_rank_sensitivity.tex

\begin{table*}[htbp]
\centering
\caption{%
  Rank sensitivity of LILA-Spectrum (Split) across NMF ranks
  $r \in \{8, 16, 32, 64\}$ at four sparsity levels on all three models.
  Metric: mean zero-shot accuracy (\%) over five benchmarks (PIQA, HellaSwag,
  ARC-Easy, ARC-Challenge, WinoGrande); WikiText-2 PPL in brackets.
  \textbf{Bold}: best accuracy per sparsity--model pair.
  Results demonstrate that performance plateaus at $r = 16$--$32$,
  confirming that a low-rank NMF basis suffices to capture the spectral
  geometry of FFN weight matrices.
}
\label{tab:rank_sensitivity}
\small
\setlength{\tabcolsep}{4pt}
\renewcommand{\arraystretch}{1.08}

\begin{tabular}{@{}lcccccccccc@{}}
\toprule
& \multicolumn{2}{c}{$r = 8$}
& \multicolumn{2}{c}{$r = 16$}
& \multicolumn{2}{c}{$r = 32$}
& \multicolumn{2}{c}{$r = 64$} \\
\cmidrule(lr){2-3}\cmidrule(lr){4-5}\cmidrule(lr){6-7}\cmidrule(lr){8-9}
\textbf{Model / Sparsity}
  & Acc$\uparrow$ & PPL$\downarrow$
  & Acc$\uparrow$ & PPL$\downarrow$
  & Acc$\uparrow$ & PPL$\downarrow$
  & Acc$\uparrow$ & PPL$\downarrow$ \\
\midrule
\multicolumn{9}{@{}l}{\textit{LLaMA-2-7B}} \\
\quad 20\% sparsity
  & 61.20 & 8.61
  & 62.17 & 8.80
  & \textbf{62.58} & 8.64
  & 61.57 & 9.22 \\
\quad 25\% sparsity
  & \textbf{59.41} & 10.15
  & 58.04 & 10.41
  & 57.98 & 10.39
  & 60.18 & 10.43 \\
\quad 30\% sparsity
  & 55.35 & 15.21
  & 56.17 & 12.59
  & \textbf{56.83} & 12.19
  & 56.62 & 12.53 \\
\quad 40\% sparsity
  & \textbf{50.66} & 25.81
  & 49.47 & 25.61
  & 50.41 & 22.31
  & 49.67 & 24.24 \\
\midrule
\multicolumn{9}{@{}l}{\textit{Phi-2}} \\
\quad 20\% sparsity
  & 60.70 & 22.37
  & 60.49 & 24.87
  & 60.61 & 25.34
  & \textbf{61.25} & 26.18 \\
\quad 25\% sparsity
  & \textbf{56.31} & 36.48
  & 54.75 & 50.78
  & 53.27 & 48.57
  & 54.60 & 41.90 \\
\quad 30\% sparsity
  & 48.81 & 85.88
  & 49.23 & 118.80
  & \textbf{49.98} & 79.93
  & 49.93 & 132.24 \\
\quad 40\% sparsity
  & 39.82 & 1345.09
  & 39.45 & 1196.11
  & 41.72 & 1340.22
  & \textbf{43.70} & 1062.68 \\
\midrule
\multicolumn{9}{@{}l}{\textit{OPT-1.3B}} \\
\quad 20\% sparsity
  & 47.73 & 25.99
  & 48.73 & 23.73
  & 48.34 & 24.87
  & \textbf{49.84} & 26.89 \\
\quad 25\% sparsity
  & \textbf{48.03} & 28.56
  & 45.34 & 33.44
  & 46.76 & 40.61
  & 47.29 & 29.69 \\
\quad 30\% sparsity
  & 45.17 & 42.16
  & \textbf{46.09} & 35.74
  & 45.47 & 38.80
  & 44.51 & 42.33 \\
\quad 40\% sparsity
  & 41.48 & 106.70
  & 40.85 & 218.13
  & 39.52 & 229.08
  & \textbf{42.25} & 93.36 \\
\bottomrule
\end{tabular}
\end{table*}

%% file: tables/app_tab_scaling_llama13b.tex

\begin{table*}[htbp]
\centering
\caption{%
  \textbf{LLaMA-2-13B scaling validation.}
  Zero-shot accuracy (\%$\uparrow$, mean over 5 benchmarks) for all LILA
  configurations and SliceGPT reference values.
  Dense baseline: 71.76\%.
  \textbf{Bold}: best LILA result per sparsity block.
  SliceGPT values reproduced from Tables~7--10 of~\citet{ashkboos2024slicegpt}.
  ``--'': not reported.
}
\label{tab:app_scaling_llama13b}
\small
\setlength{\tabcolsep}{4pt}
\renewcommand{\arraystretch}{1.08}
\begin{tabular}{@{}llcrrr@{}}
\toprule
\textbf{Method} & \textbf{Variant} & \textbf{Calib.}
  & \textbf{20\%} & \textbf{25\%} & \textbf{30\%} \\
\midrule
\multicolumn{6}{@{}l}{\small\textit{SliceGPT reference (no fine-tuning)}} \\[2pt]
SliceGPT & -- & WikiText-2 & 63.45 & 58.90 & 55.16 \\
SliceGPT & -- & Alpaca     & 67.44 & 65.44 & 62.34 \\
\midrule
\multicolumn{6}{@{}l}{\small\textit{LILA (ours) -- no fine-tuning (noRFT)}} \\[2pt]
LILA-Spectrum   & Abs & None      & \textbf{65.06} & 49.22 & 39.75 \\
LILA-Spectrum   & Split & None      & 45.26 & 58.50 & 56.26 \\
LILA-Spectrum   & Act & WikiText  & 57.96 & \textbf{60.87} & 44.12 \\
LILA-Sensitivity& Abs & None      & 36.05 & 35.84 & 36.16 \\
LILA-Sensitivity& Split & None      & 36.40 & 36.16 & 35.94 \\
LILA-Sensitivity& Act & WikiText  & 64.50 & 60.60 & \textbf{56.47} \\
LILA-Residual   & Abs & None      & 42.21 & 40.84 & 39.33 \\
LILA-Residual   & Split & None      & 39.10 & 37.89 & 37.00 \\
LILA-Residual   & Act & WikiText  & 63.95 & 60.15 & 57.00 \\
\midrule
\multicolumn{6}{@{}l}{\small\textit{After WikiText-2 LoRA recovery fine-tuning (+RFT)}} \\[2pt]
Wanda+RFT       & n/a   & WikiText  & 67.24 & \textbf{66.19} & \textbf{64.49} \\
\midrule
LILA-Spectrum   & Abs & None      & 67.43 & 64.44 & 62.55 \\
LILA-Spectrum   & Split & None      & 66.72 & 65.09 & 64.14 \\
LILA-Spectrum   & Act & WikiText  & 66.05 & 64.55 & 63.94 \\
LILA-Sensitivity& Abs & None      & 54.59 & 53.69 & 50.83 \\
LILA-Sensitivity& Split & None      & 56.60 & 59.28 & 51.72 \\
LILA-Sensitivity& Act & WikiText  & 67.29 & 66.03 & \textbf{64.43} \\
LILA-Residual   & Abs & None      & 65.05 & 63.32 & 61.47 \\
LILA-Residual   & Split & None      & 64.27 & 62.38 & 60.27 \\
LILA-Residual   & Act & WikiText  & \textbf{67.63} & \textbf{66.07} & 64.21 \\
\bottomrule
\end{tabular}
\end{table*}

%% file: tables/app_tab_scaling_opt27b.tex

\begin{table*}[htbp]
\centering
\caption{%
  \textbf{OPT-2.7B scaling validation.}
  Zero-shot accuracy (\%$\uparrow$, mean over 5 benchmarks) for all LILA
  configurations and SliceGPT reference values.
  Dense baseline: 56.39\%.
  \textbf{Bold}: best LILA result per sparsity and fine-tuning block.
  SliceGPT reference from Table~8 of~\citet{ashkboos2024slicegpt}
  (Alpaca calibration, no fine-tuning); SliceGPT does not report +RFT
  results for OPT models.
}
\label{tab:app_scaling_opt27b}
\small
\setlength{\tabcolsep}{4pt}
\renewcommand{\arraystretch}{1.08}
\begin{tabular}{@{}llcrrr@{}}
\toprule
\textbf{Method} & \textbf{Variant} & \textbf{Calib.}
  & \textbf{20\%} & \textbf{25\%} & \textbf{30\%} \\
\midrule
\multicolumn{6}{@{}l}{\small\textit{SliceGPT reference (Alpaca, no fine-tuning)}} \\[2pt]
SliceGPT & -- & Alpaca & 53.68 & 52.86 & 51.13 \\
\midrule
\multicolumn{6}{@{}l}{\small\textit{LILA (ours) -- no fine-tuning (noRFT)}} \\[2pt]
LILA-Spectrum   & Abs & None      & \textbf{52.76} & 51.59 & \textbf{50.16} \\
LILA-Spectrum   & Split & None      & 52.04 & \textbf{51.77} & 49.16 \\
LILA-Spectrum   & Act & WikiText  & 52.52 & 51.10 & 49.64 \\
LILA-Sensitivity& Abs & None      & 45.47 & 43.63 & 36.27 \\
LILA-Sensitivity& Split & None      & 44.19 & 41.56 & 35.58 \\
LILA-Sensitivity& Act & WikiText  & 37.90 & 35.33 & 35.52 \\
LILA-Residual   & Abs & None      & 43.86 & 42.67 & 35.83 \\
LILA-Residual   & Split & None      & 45.89 & 43.09 & 36.92 \\
LILA-Residual   & Act & WikiText  & 37.87 & 36.44 & 35.58 \\
\midrule
\multicolumn{6}{@{}l}{\small\textit{After WikiText-2 LoRA recovery fine-tuning (+RFT)}} \\[2pt]
LILA-Spectrum   & Abs & None      & 52.99 & 52.60 & 51.64 \\
LILA-Spectrum   & Split & None      & 53.04 & 52.67 & 51.81 \\
LILA-Spectrum   & Act & WikiText  & \textbf{53.18} & \textbf{52.43} & \textbf{52.19} \\
LILA-Sensitivity& Abs & None      & 51.23 & 50.98 & 49.23 \\
LILA-Sensitivity& Split & None      & 51.50 & 50.67 & 49.52 \\
LILA-Sensitivity& Act & WikiText  & 51.86 & 49.98 & 44.85 \\
LILA-Residual   & Abs & None      & 51.60 & 51.32 & 49.91 \\
LILA-Residual   & Split & None      & 51.70 & 50.69 & 48.94 \\
LILA-Residual   & Act & WikiText  & 51.49 & 50.54 & 48.05 \\
\bottomrule
\end{tabular}
\end{table*}